\theoremstyle{plain}
\newtheorem{theorem}{Theorem}[section]
\theoremstyle{definition}
\theoremstyle{remark}
\icmltitlerunning{Higher-order Spatio-temporal Physics-incorporated Graph Neural Network for Multivariate Time Series Imputation}
\begin{document}

\twocolumn[
\icmltitle{Higher-order Spatio-temporal Physics-incorporated Graph Neural Network for Multivariate Time Series Imputation}



\icmlsetsymbol{equal}{*}

\begin{icmlauthorlist}
\icmlauthor{Guojun Liang}{sch}
\icmlauthor{Prayag Tiwari}{sch}
\icmlauthor{Slawomir Nowaczyk}{sch}
\icmlauthor{Stefan Byttner}{sch}

\end{icmlauthorlist}


\icmlaffiliation{sch}{School of Information Technology, Halmstad University, Halmstad, Sweden}

\icmlcorrespondingauthor{Guojun Liang}{guojun.liang@hh.se}
\icmlcorrespondingauthor{Prayag Tiwari}{prayag.tiwari@hh.se}


\vskip 0.3in
]


%
\printAffiliationsAndNotice{}  

\begin{abstract}
Exploring the missing values is an essential but challenging issue due to the complex latent spatio-temporal correlation and dynamic nature of time series. Owing to the outstanding performance in dealing with structure learning potentials, Graph Neural Networks (GNNs) and Recurrent Neural Networks (RNNs) are often used to capture such complex spatio-temporal features in multivariate time series. However, these data-driven models often fail to capture the essential spatio-temporal relationships when significant signal corruption occurs. Additionally, calculating the high-order neighbor nodes in these models is of high computational complexity. To address these problems, we propose a novel higher-order spatio-temporal physics-incorporated GNN (HSPGNN). Firstly, the dynamic Laplacian matrix can be obtained by the spatial attention mechanism. Then, the generic inhomogeneous partial differential equation (PDE) of physical dynamic systems is used to construct the dynamic higher-order spatio-temporal GNN to obtain the missing time series values.
Moreover, we estimate the missing impact by Normalizing Flows (NF) to evaluate the importance of each node in the graph for better explainability. Experimental results on four benchmark datasets demonstrate the effectiveness of HSPGNN and the superior performance when combining various order neighbor nodes. Also, graph-like optical flow, dynamic graphs, and missing impact can be obtained naturally by HSPGNN, which provides better dynamic analysis and explanation than traditional data-driven models.  Our code is available at \href{https://github.com/gorgen2020/HSPGNN}{\textit{https://github.com/gorgen2020/HSPGNN}}.
\end{abstract}

\section{Introduction}
Due to the unreliability of sensors and the malfunctions of the transmitting network, missing data is very common in the time series data collection stage. Thus, time series imputation is a crucial task in multivariate time series analysis (TSA) \citep{jin2023survey}, but obtaining accurate values for the missing data is a challenging task. In the early research stages, scholars focused on capturing the temporal structure and ignored the abundant latent spatial structure of multivariate time series. Notably, the spatial structure sometimes does not simply represent physical proximity but the functional dependency of sensors or their distance in latent space \citep{ziat2017spatio}. Recently, researchers have realized the importance of spatial structure in multivariate time series and developed many spatio-temporal models in this field. Among these models, Deep Learning (DL) models are the most promising approaches for their outstanding non-linear fitting and representation capacity. To capture the temporal feature, recurrent neural networks (RNNs) \citep{li2017diffusion, cao2018brits,liang2023semantics,challu2023nhits} and temporal attention \citep{vaswani2017advances,guo2020dynamic,du2023saits} are often adopted. The core concept of RNN is using the recurrent learning structure for time series, which allows the output from some nodes to affect subsequent input to the same nodes \citep{tealab2018time}. Graph neural networks (GNNs) hold great potential in dealing with non-Euclidean structure data and have become a hotspot in this field \citep{asif2021graph,wu2021inductive,li2017diffusion,andrea2021filling}. However, many researchers applied GNNs with the static graph of the first-order neighbors, which cannot fully exploit the dynamics of latent spatial structure and the higher-order neighbor nodes. Moreover, spatio-temporal features are difficult to capture by those data-driven models in the case of the partially observed datasets with significant missing rates or various missing patterns \citep{chen2019missing}. In addition, these models lack a strong explanation of the imputation mechanism in GNN, which in turn limits the further application of these models.

Vanilla RNNs are used to learn temporal derivatives \citep{elman1990finding} but lack strong theoretical analyses and require high time complexity. Spatio-temporal features are difficult to capture by those data-driven models in the case of the partially observed datasets with significant missing rates or various missing patterns \citep{chen2019missing}.  In addition, these models lack a strong explanation of the imputation mechanism in GNN, which in turn limits the further application of these models.

Meanwhile, physics models are applied to deal with other time series problems. Physics models can be representative of the real-world dynamics of time series such that the unobserved values can be properly predicted with small data \citep{shi2021physics} and provide better explainability. Many physics models are utilized in various time series applications, e.g., the Lighthill- Whitham-Richards (LWR) model \citep{lighthill1955kinematic} in traffic state estimation, the wave propagation in heat diffusion system \citep{tie2016theoretical}, and the air leakage physics model in vehicles \citep{fan2022incorporating}. However, there are two main disadvantages of physics models. For one thing, choosing the proper physics model is quite difficult since different physics laws govern the dynamic systems in different domains, or even one dynamic system is governed by several physics laws with partial differential equations (PDEs) of different order derivatives. Therefore, inappropriately chosen models may result in poor performance. For another thing, the physical parameters or the internal and external sources are unknown in some applications, which leads to infinite possible solutions in the physics models. Paradoxically, to justify the model requires a large amount of observed data, which is impossible in missing severe data situations. Recently, physics-informed neural network (PINN) models have been proposed to attempt to solve the second problem \citep{cai2021physics,raissi2019physics,cuomo2022scientific}. The framework combines the physics PDE terms as part of the optimization process via regularization, which intends to mitigate the limitations of DL and the physics model. Unfortunately, while it is demonstrated that existing PINN methodologies can learn good models for relatively trivial problems, they can easily fail to learn relevant physical phenomena for even slightly complex problems \citep{krishnapriyan2021characterizing}. These possible failures are due to the PINN’s setup, which makes the loss landscape very hard to optimize. To address the problems mentioned above, we propose a novel Higher-order Spatio-temporal Physics-incorporated GNN (HSPGNN) framework for multivariate time series imputation. The main contributions are as follows:
\begin{itemize}[leftmargin=*] 
\item We incorporate the physics model into a data-driven model with trainable physical parameters, where the novel model can optimize the combination of different physics laws with different orders, making our model more robust and able to provide better explanations. Also, we estimate the missing impact by Normalizing Flows (NF) to evaluate the importance of each node in the graph. 

\item We prove that the space and time complexity of vanilla RNN in calculating the $(M-1)$-order derivative by stacking layers for $M$ length of $N$ nodes time series is at least $\mathcal{O}(3M(M-1)/2)\approx \mathcal{O}(M^2) $ and $\mathcal{O}(M^2-M)MN/2)\approx \mathcal{O}(M^3N)$, while our model can reduce the space and time complexity in calculating the combination of $(M-1)$-order derivative to approximately $\mathcal{O}(k_t)$ and $\mathcal{O}(k_tM^2N)$ with $k_t \ll M$, respectively.     
\item Experimental results on four benchmark datasets show the effectiveness and explainability of HSPGNN, especially when the data are of complex missing patterns. Also, the combination of multi-hop is better than only one specific hop, which shows the existence of multi-hop physical correlations in the time series data.  
\end{itemize}

\section{Related work}


\textbf{Matrix completion models:} Taking the time series dataset as a matrix or tensor, another data imputation in time series is named matrix completion  \citep{johnson1990matrix,candes2012exact,nguyen2019low}. The main approach utilizes matrix factorization (MF) to obtain the low-rank representation and then use it to recover the missing data. Traditional MF \citep{cichocki2009fast} methods adopt the singular value decomposition (SVD) that calculates low-rank representations and use them to recover the missing data. TTLRR \citep{yang2022robust} proposes a novel transformed tensor SVD for tensor data recovery and subspace clustering. These methods impute the missing data under a solid theoretical guarantee for data recovery. However, they take the time series as a grid structure dataset, neglecting the spatio-temporal correlation hidden in the matrix. M$^2$DMTF \citep{fan2021multi}  proposes a new tensor decomposition method called multi-mode nonlinear deep tensor factorization, which can be regarded as a high-order generalization of the two-mode nonlinear deep MF, but the optimization efficiency is low in the complex data missing situation. HRST-LR \citep{xu2023hrst} attempts to represent the spatio-temporal correlation by the multiplication of the spatial matrix and the temporal matrix.  Still, the spatio-temporal correlation is far more complex than the formulated objective function. Therefore, these models limit the further improvement.

\textbf{Non-GNN deep learning models:}
Many scholars attempt to obtain the missing value of higher accuracy through deep learning methods \citep{fang2020time}. Some of them focus on representing the temporal feature of time series. RNN and attention mechanisms are often applied to capture the temporal features. BRITS \citep{cao2018brits} proposes a bidirectional RNN for imputation. Transformers \citep{vaswani2017advances} try to capture the temporal feature by the parallel multi-head attention mechanism. SAITS \citep{du2023saits} adopts a weighted combination of two diagonally-masked self-attention for imputation. NHITS \citep{challu2023nhits} incorporates novel hierarchical interpolation and multi-rate data sampling techniques for time series prediction and interpolation. TDM \citep{zhao2023transformed} imputes the missing values of two batches of samples by transforming them into a latent space through deep invertible functions and matching them distributionally. However, these models ignore the latent graph structure of the multivariate time series.

\textbf{GNN models:}
More and more scholars are gradually realizing the spatial correlations between the sensors since most time series datasets have innate and inherent graph structures. Such a missing data phenomenon in the graph is known as a spatio-temporal kriging problem. Traditional machine learning methods, such as KNN \citep{cover1967nearest, oehmcke2016knn,sridevi2011imputation}, try to find out the missing value by the neighbor nodes, but its accuracy depends on feature engineering, and the representation capability is limited. GNN is a neural network devised to deal with graph representation in neural networks and is a hotspot in this field. IGNNK \citep{wu2021inductive} model generates random subgraphs as samples and the corresponding adjacency matrix for each sample to reconstruct the missing data. MPGRU, similar to the traffic flow forecasting model DCRNN \citep{li2017diffusion}, applies the GNN and Gated Recurrent Unit (GRU) to capture the spatio-temporal features for imputation. GRIN \citep{andrea2021filling} proposes a bidirectional GNN and GRU framework to learn spatio-temporal representations by message passing mechanism. However, even though some of those data-driven models mentioned above can obtain high accuracy in some datasets, their robustness can easily be affected by the missingness pattern. Also, it is difficult for these data-driven models to explore the physics correlations hidden in the datasets, especially when the datasets are seriously corrupted.

\textbf{Explainable GNN:}
In another research hot field of explainable GNN, one distinguishes instance-level explanations and model-level explanations \citep{yuan2022explainability}. The instance explainable GNNs attempt to explain the models by the relationship between the output of GNN and the input node or the input features of the node. SA \citep{baldassarre2019explainability} produces local explanations using the squared norm of its gradient w.r.t the input features. GNNExplainer \citep{ying2019gnnexplainer} maximizes the mutual information between a GNN's prediction and the distribution of possible subgraph structures to explain the predictions. GraphMask \citep{schlichtkrull2020interpreting} proposes a post-hoc method for interpreting the predictions of GNNs which identifies unnecessary edges. GraphLime \citep{huang2022graphlime} leverages the feature information of the N-hop network neighbors of the node being explained and their predicted labels in a local subgraph, which produces finite features as the explanations for a particular prediction. GNN-LRP \citep{schnake2021higher} produces detailed explanations that subsume the complex nested interaction between the GNN model and the input graph. The model-level method provides a generic understanding of GNN based on graph generation. XGNN \citep{yuan2020xgnn} explains GNNs by training a graph generator so that the generated graph patterns maximize a certain prediction of the model. However, None of the models mentioned above explain the models from the physics perspective.  Moreover, 
these explainable GNN models depend on the completion of the datasets. Therefore, it is difficult to satisfy this condition when the original datasets exhibit significant levels of missing data.

\section{Methodology}
\textbf{Problem Definition}: For multivariate time series of N sensors with T time steps and D features in each sensor, it can be denoted as $\mathbf{X} \in \mathbb{R}^{T \times N \times D}$. $\mathbf{X}_t \in \mathbb{R}^{N \times D}$ represents the feature matrix at time $t$. Obviously, $\mathbf{X} = \{\mathbf{X}_1,\mathbf{X}_2,\dots, \mathbf{X}_T\}$. Correspondingly, $\mathbf{M} \in \mathbb{R}^{T \times N \times D}$ is used to represent the mask of missing value. $\mathbf{M}_{ijk}=1$ means that the data in the $k$-th feature of $j$-th sensor at $i$-th time step is missing. Otherwise,  $\mathbf{M}_{ijk}=0$ represents no missing data. The main task of time series imputation is to predict the missing value in time series. It can be formulated as follows: 
\begin{equation} 
\left [ \mathbf{X} \odot (1-\mathbf{M}) \right ]\xrightarrow{f(.)} \left [ \mathbf{X} \odot \mathbf{M} \right ],  
\label{Eq:problem definition1}  
\end{equation} 

where $\odot$ denotes the Hadamard product. Our goal is to learn a function $f(.)$ that maps the observed values to predict the missing values. In this study, the time series from each sensor are always univariate, thus, $D=1$ throughout the following discussion.

The overview of HSPGNN is shown in Fig. \ref{framework}. For each input missing data $\mathbf{X}_{t-M:t}$ of $M$ length batch, first a multilayer perceptron (MLP) is applied to get the coarse imputation value $\mathbf{\Bar{P}}_{t-M:t}$. Then, the dynamic Laplacian matrices $L_{t-M:t}$ are obtained using spatial attention with these coarse imputed values. Two physics-incorporated layers are adopted to calculate the combination of the high-order Laplacian matrix and Toeplitz matrix with physics constraint and get the more precise imputation value $\mathbf{\hat{P}}_{t-M:t}$. Long short-term memory (LSTM) and temporal attention are adopted to predict the future data $\mathbf{\hat{X}}_{t:t+M} \odot (1-\mathbf{M}_{t:t+M})$ without missing since the ground truth values of missing data are not allowed to take part in the training stage.  
\begin{figure*}[!h] 
\centering  
\includegraphics[width=\textwidth]{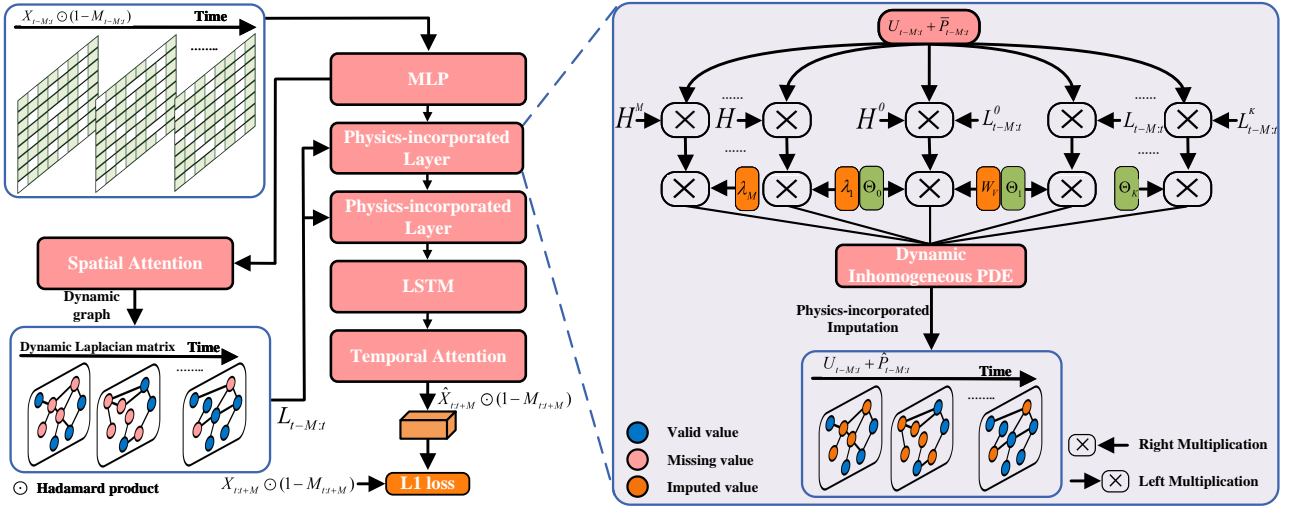} 
\caption{The framework of HSPGNN. Firstly, coarse imputation value  $\mathbf{\Bar{P}}$ can be estimated by MLP. Then, on the left, the latent dynamic spatial graphs $\boldsymbol{\mathcal{G}_t}$ structure is learned from the estimated value, providing the dynamic Laplacian matrix. On the right, a novel physics-incorporated GNN layer is devised, which can combine multi-order spatial and temporal neighbor signals into consideration. As for the physics-incorporated GNN layer, physical dynamic system inhomogeneous PDE is applied to capture the spatio-temporal correlation and perform more accurate imputed values $\hat{\mathbf{P}}_{t-M:t}$. Thus, the imputed values are utilized to predict the future value $\hat{\mathbf{X}}_{t:t+M} \odot (1-\mathbf{M}_{t:t+M})$ by LSTM and temporal attention since the ground truth of the missing values are not available in the training stage.}  \label{framework}   
\end{figure*}

\subsection{Generic inhomogeneous PDE of physical dynamic system}
Considering the 2D physical dynamic system of continuous time, the spatio-temporal constraint with the time-varying external sources can be formulated as the following generic inhomogeneous PDE \citep{salsa2022partial,farlow1993partial, saha2021physics}: 
\begin{equation}
\label{PDE} 
\begin{aligned}     
\frac{\partial^m u_t}{\partial t^m}=F(x,y,u_t,\frac{\partial u_t}{\partial x},\frac{\partial u_t}{\partial y},\frac{\partial^2 u_t}{\partial x^2},\frac{\partial^2 u_t}{ \partial y^2},...;\theta)\\
+v_t(x,y),  \quad  (x,y)\in \Omega,\quad t \in [0,T],  
\end{aligned}
\end{equation}
where $u_t(x,y)$ is the observed physical value at the spatial location $(x,y)$ at time $t$, $F$ is the spatial function of $u_t$, and $\theta$ are the parameters of $F$. Moreover, $v_t(x,y)$ is the unobservable external source term or perturbation at location $(x,y)$ at time $t$, which is governed by independent dynamics. This equation is widespread in many applications by considering a specific order. For example, when considering the first orders of $u_t$ and ignoring or approximating the unobserved term, the equation is equivalent to the LWR model in traffic state estimation, the optical flow \citep{aslani2013optical} in tracking moving objects in the video, and so on. Also, the wave propagation equation in the heat diffusion system \citep{o1994wave} can be obtained when considering the second order of the equation. 

However, the dynamic physical PDE is difficult to solve since the external source is unknown, and so is the $\theta$ in the right-hand term. Moreover, the equation will probably be a combination of multi-order and governed by several physics laws in the complex spatio-temporal correlation situation, which makes it challenging to find the PDE solution. On the other hand, some scholars utilize PINN to find the optimal solution, taking advantage of the excellent non-linear fitting ability of neural networks. PINNs often consider the physics law as a soft regularization on an empirical loss function, but these approaches can easily fail to learn the relevant physical phenomena for complex problems. This challenge mainly lies in optimization caused by soft regularization. While \citet{krishnapriyan2021characterizing}   proposed curriculum regularization, it is time-consuming and laborious to deploy it. Moreover, the curriculum regularization must restart when the physics PDE constraint changes. To avoid such a problem, instead of serving the physics PDE as the soft regularization, we incorporate the physics law into our neural network. Firstly, we assume the dynamic system as (1) The function $F$ can be represented by the linear combination of different orders of $u_t$. (2) The unknown external source is related to the current state of $u_t$ and can be represented by the linear relationship as $v_t(x,t)=w_t*u_t$  

Also, the left-hand term in Eq. \ref{PDE} can be regarded as the combination of mixed time order with the learnable parameter $\lambda$  since the $n$ order is unknown and the dynamic system can be governed by several physical laws of different partial derivative order in time $t$. As a result, Eq. \ref{PDE} can be transformed as follows:
\begin{equation} 
\label{sPDE} 
\begin{aligned}   
\sum_{m=1}^{M'} \lambda_m\frac{\partial^m u_t}{\partial t^m}=\sum_{k_x=0}^K\sum_{k_y=0}^K \theta_{k_x}\theta_{k_y}\frac{\partial^{k_x} u_t}{\partial x^{k_x}}\frac{\partial^{k_y} u_t}{\partial y^{k_y}}\\
+w_v u_t,  \quad  (x,y)\in \Omega,\quad t \in [0,T].   
\end{aligned}  
\end{equation}
In this study, we focus on discrete multivariate time series imputation with the non-Euclidean graph structure. The first-order derivative of time can be defined as follows: 
\begin{equation}     
\frac{\partial\mathbf{X}_{t-1}}{\partial t} = \frac{\mathbf{X}_t - \mathbf{X}_{t-1}}{t-(t-1)}  =\Delta \mathbf{X}_t.  
\label{RNN22} 
\end{equation} 
Then, the first-order derivative of space at time $t$ can be defined as: 
\begin{equation}     
\frac{\partial \mathbf{X}_t^{i}}{\partial x}=\sum_{j \in N(i)}\mathbf{X}_t^{j}  - \mathbf{X}_t^{i}. \end{equation} 
where $\mathbf{X}_t^{i}$ represent the feature of $i$-th node at time t, while $N(i)$ represent the neighbors of  $i$-th node. Thus, From the definitions of the Toeplitz matrix $\mathbf{H}=Toeplitz(0,1,-1)$ \citep{bottcher2000toeplitz} and Laplacian matrix $\mathbf{L}$ \cite{kipf2016semi}, let $\Delta \mathbf{X}_1=\mathbf{X}_1$, the matrix format of first-order derivative of time and space can be formulated as follows: 
\begin{equation}  
\scalebox{0.75}{$
\frac{\partial\mathbf{X}}{\partial t} =\begin{bmatrix} \Delta \mathbf{X}_1 \\ \Delta \mathbf{X}_2 \\ \dots \\\Delta \mathbf{X}_M  \end{bmatrix}= \begin{bmatrix}  0 & 0  & \dots  & 1 \\  0 & \dots & 1 & -1  \\  0 & \dots & \dots  & \dots   \\  0 & 1 & -1 & \dots \\   1 & -1 & 0 & \dots  \end{bmatrix}_{M \times M} \otimes  \begin{bmatrix} \ \mathbf{X}_M \\ \mathbf{X}_{M-1} \\ \dots \\ \mathbf{X}_1  \end{bmatrix}_{ M \times N} = \mathbf{HX}      $}
\label{RNN1}
\end{equation}

\begin{equation}   \scalebox{0.70}{$
\frac{\partial\mathbf{X}}{\partial x}=\begin{bmatrix} \sum_{j \in N(1)}\mathbf{X}^{j}  - \mathbf{X}^{1}, \sum_{j \in N(2)}\mathbf{X}^{j}  - \mathbf{X}^{2}, \dots , \sum_{j \in N(N)}\mathbf{X}^{j}  - \mathbf{X}^{N} \end{bmatrix}= -\mathbf{XL} $}
\end{equation}

where $\otimes$ denotes matrix multiplication. Therefore, the Eq. \ref{sPDE} can be formulated as the discrete format with the feature matrix as follows:
\begin{equation} 
\label{matrix}
\begin{aligned} 
\sum_{m=1}^{M'} \lambda_m \mathbf{H}^{m}\mathbf{X}=\sum_{k=0}^K \mathbf{\Theta}_{k}\mathbf{X}\mathbf{L}^k+\mathbf{X}\mathbf{W_v} 
\end{aligned}   
\end{equation}
Eq. \ref{matrix} is the matrix format of the dynamic system. However, when there is missing data in the matrix $\mathbf{X}$, the equation is no longer satisfied with only observed values. Considering the partially observed value $\mathbf{U}=\mathbf{X} \odot (1-\mathbf{M})$ and the missing value $\mathbf{P}=\mathbf{X} \odot \mathbf{M}$, Eq. \ref{matrix} can be transformed as follows: 
\begin{equation} 
\label{missmatrix}
\scalebox{0.75}{$
\begin{aligned}   \sum_{m=1}^{M'} \lambda_m \mathbf{H}^{m}(\mathbf{U} + \mathbf{P}) =\sum_{k=0}^K \mathbf{\Theta}_{k}(\mathbf{U} + \mathbf{P})\mathbf{L}^k + (\mathbf{U} + \mathbf{P})\mathbf{W_v} 
\end{aligned}  $}
\end{equation}

It is still challenging to find out the precise missing value in the heavy missing data situation even though we obtained the Eq. \ref{missmatrix}. Instead, we apply a simple MLP to impute the coarse imputation value  $\mathbf{\Bar{P}} = \mathrm{MLP}(\mathbf{U})$, then, inspired by alternating direction method of multipliers (ADMMs) optimization method in matrix completion, these coarse values are input to Eq. \ref{missmatrix} for training to get an estimated value $\hat{\mathbf{P}}_t$ with more accuracy at time $t$ than $\bar{\mathbf{P}}_t$. Thus, Eq. \ref{missmatrix} can be converted into vector format with the coarse imputation $\mathbf{\Bar{P}}_t$ as follows: \begin{equation}    
\label{tmatrix} 
\begin{aligned} \mathbf{\hat{P}}_{t}=\sum_{k=0}^K \mathbf{\Theta_{k} }(\mathbf{U}_{t-1} + \mathbf{\Bar{P}}_{t-1})\mathbf{L}_t^k + (\mathbf{U}_{t-1} + \mathbf{\Bar{P}}_{t-1})\mathbf{W_v} \\
- \sum_{m=1}^{M'-2}{\lambda}'_m(\mathbf{U}_{t+m} + \mathbf{\Bar{P}}_{t+m}) -{\lambda}'_0(\mathbf{U}_{t-1} + \Bar{\mathbf{P}}_{t-1} ),
\end{aligned}  
\end{equation}
where the$ \mathbf{\lambda}'=[\lambda_{0}',\lambda_{1}',\cdots,\lambda'_{M'-2}]$ are the parameters of $\sum_{m=1}^{M'} \lambda_m \mathbf{H}^{m}$ after the combination of row vectors. Also, we adopt the spatial attention (SAtt) \citep{feng2017effective, guo2019attention} to generate the dynamic Laplacian matrix $\mathbf{L}_t$ instead of the static graph.

\subsection{Representational capability analysis}   
 Due to the creative recurrent structure for time series, RNNs are often adopted to capture the temporal feature. 
 In some PINN models \citep{saha2021physics}, they try to learn the temporal derivative by vanilla RNN \citep{elman1990finding}. However, the space and time complexity in learning the derivative by vanilla RNN is high.    
 \begin{theorem}    
 The vanilla RNNs of single layer must have more than $M-2$ neurons to calculate the first-order derivative of $M$ length time series. 
 \label{theorem1}
 \end{theorem}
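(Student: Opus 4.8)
The plan is to recast the claim as a rank (dimension-counting) lower bound on the width of the hidden layer. First I would make precise what it means for the network to ``calculate the first-order derivative'': following Eq.~\ref{RNN1}, the derivative of the length-$M$ series is the image of the input under the difference operator, i.e.\ the vector of consecutive differences $(\mathbf{X}_2-\mathbf{X}_1,\ldots,\mathbf{X}_M-\mathbf{X}_{M-1})$, realised by a fixed $(M-1)\times M$ matrix $\mathbf{D}$ (the non-trivial rows of the Toeplitz matrix $\mathbf{H}$, dropping the padding convention $\Delta\mathbf{X}_1=\mathbf{X}_1$). The single-layer RNN must produce this entire vector at its readout once the whole sequence has been consumed.

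Next I would record the only structural fact about $\mathbf{D}$ that is needed: it is surjective, i.e.\ $\operatorname{rank}\mathbf{D}=M-1$ and $\operatorname{im}\mathbf{D}=\mathbb{R}^{M-1}$. This is immediate because $\mathbf{D}$ is lower bidiagonal with nonzero diagonal, so as $\mathbf{X}$ ranges over $\mathbb{R}^{M}$ the output $\mathbf{D}\mathbf{X}$ ranges over all of $\mathbb{R}^{M-1}$.

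Then I would set up the RNN side. Let $p$ be the number of hidden neurons, so the terminal hidden state is some (possibly nonlinear) map $h_M=\phi(\mathbf{X})\in\mathbb{R}^{p}$, and let the readout be the affine map $y=\mathbf{V}h_M+\mathbf{c}$ used by a vanilla RNN. Exact computation means $\mathbf{V}\phi(\mathbf{X})+\mathbf{c}=\mathbf{D}\mathbf{X}$ for every $\mathbf{X}$. The key step is that the nonlinearity of $\phi$ turns out to be irrelevant: every output lies in the affine subspace $\operatorname{im}\mathbf{V}+\mathbf{c}$, while the set of all outputs equals $\operatorname{im}\mathbf{D}=\mathbb{R}^{M-1}$ by surjectivity, so $\mathbb{R}^{M-1}\subseteq\operatorname{im}\mathbf{V}+\mathbf{c}$. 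Translating by $-\mathbf{c}$ gives $\operatorname{im}\mathbf{V}\supseteq\mathbb{R}^{M-1}$, whence $p\ge\operatorname{rank}\mathbf{V}\ge M-1$, i.e.\ strictly more than $M-2$ neurons.

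The main obstacle I anticipate is definitional rather than technical: one must fix the convention that the network emits the complete derivative vector from the terminal state (the reading that matches the matrix form $\partial\mathbf{X}/\partial t=\mathbf{H}\mathbf{X}$), because under a per-step reading each $\Delta\mathbf{X}_t$ needs only one remembered value and no width bound would arise. Once that convention is in place, the activation $\sigma$ and the recurrence weights never enter the argument, since the counting is pushed entirely onto the linear readout; the surjectivity of $\mathbf{D}$ does all the work.
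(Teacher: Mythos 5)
There is a genuine gap: you have changed the statement being proved, and the convention you discard is the one the paper actually uses. In the paper's setup (Eq.~\ref{RNN}--\ref{RNN3}) the RNN emits an output at \emph{every} time step, and the requirement is $y_t=\mathbf{X}_t-\mathbf{X}_{t-1}$ for each $t\ge 2$; the stacked matrix in Eq.~\ref{RNN3} is exactly the unrolled per-step readout, with row $t$ containing the geometric weights $w_yw_hu_h^{j}$ that the recurrence assigns to past inputs. Your closing remark that ``under a per-step reading each $\Delta\mathbf{X}_t$ needs only one remembered value and no width bound would arise'' is false for a vanilla RNN, and it contradicts the very theorem you are asked to prove. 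With one linear neuron, $y_t=w_yw_h\sum_{s\le t}u_h^{t-s}\mathbf{X}_s$: matching the coefficients of $\mathbf{X}_t$ and $\mathbf{X}_{t-1}$ forces $u_h=-1$, which then leaves a nonzero coefficient $w_yw_hu_h^{2}$ on $\mathbf{X}_{t-2}$. The hidden state cannot hold $\mathbf{X}_{t-1}$ for exactly one step and then forget it; its memory decays geometrically, and cancelling the resulting tail over a window of length $M$ is precisely what forces the width to grow. The paper's argument is that the per-step impulse response of an $m$-neuron linear RNN is a sum of $m$ geometric sequences $\sum_l c_l(u_h^{(l)})^{j}$, and by the Vandermonde rank argument such a sum can equal the truncated response $(1,-1,0,\dots,0)$ of length $M$ only if $m>M-2$. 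None of that content appears in your proposal.

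Your own argument---that a surjective affine readout onto $\mathbb{R}^{M-1}$ from a $p$-dimensional terminal state forces $p\ge M-1$---is internally correct, but it proves a different and essentially trivial statement: it is a pure bottleneck dimension count that uses nothing about recurrence, about the Toeplitz/difference structure of $\mathbf{H}$ beyond surjectivity, or about why vanilla RNNs in particular are inefficient at computing finite differences. It would apply verbatim to an MLP, so it cannot be the intended content of a theorem whose purpose (in the surrounding text) is to justify the claimed $\mathcal{O}(M^2)$/$\mathcal{O}(M^3N)$ cost of stacking vanilla RNN layers to obtain higher-order temporal derivatives. To repair the proof you would need to adopt the per-step output convention and then establish the linear-independence step: for distinct recurrent weights $u_h^{(1)},\dots,u_h^{(m)}$ the Vandermonde matrix in Eq.~\ref{RNN8} has rank $m$, so the homogeneous system of Eq.~\ref{RNN7} admits only the zero solution whenever $m\le M-2$, contradicting the normalization $\sum_l w_y^{(l)}w_h^{(l)}=1$ needed to produce the leading coefficient of the difference.
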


\begin{proof}
The vanilla RNN can be described as follows: \begin{equation}
\begin{aligned}    
h_t &=\sigma_h(W_hx_t+u_hh_{t-1}+b_h)\\        y_t &=\sigma_y(w_y h_t+b_y)       
\end{aligned}
\label{RNN}
\end{equation}  
 From Eq. \ref{RNN1}, the derivative has a linear relationship. Thus, let $\sigma_h(x_t)=x_t$, $\sigma_y(x_t)=x_t$, and ignore the bias term. To simplify the proof, we take the initial hidden state $h_0=0$ in vanilla RNN, and the output of vanilla RNN to calculate the first-order partial derivative of time series. Thus, Eq. \ref{RNN22} can be formulated as:  
\begin{equation} 
\mathbf{y}_t=\frac{\partial\mathbf{X}_{t-1}}{\partial t} = \mathbf{X}_t - \mathbf{X}_{t-1}=\Delta \mathbf{X}_t,\quad t \ge 2 
\label{RNN2} 
\end{equation}
Considering only one neuron in the RNN and Eq. \ref{RNN}, the matrix format of Eq. \ref{RNN2} can be formulated as:  
\begin{equation}     
\scalebox{0.85}{$
\begin{bmatrix} y_2 \\ y_3 \\ \dots \\ y_M  \end{bmatrix} =w_y w_h \begin{bmatrix}  0 & 0  & \dots  & 1 & u_h\\  0 & \dots & 1 & u_h & u_h^2 \\  0 & \dots & \dots  & \dots  & \dots \\  0 & 1 & u_h & \dots & u_h^{M-2}\\   1 & u_h & u_h^2 & \dots & u_h^{M-1} \end{bmatrix}_{(M-1) \times M} \otimes  \begin{bmatrix} \ \mathbf{X}_M \\ \mathbf{X}_{M-1} \\ \dots \\ \mathbf{X}_1  \end{bmatrix} $}    \label{RNN3}  \end{equation}  
From Eq. \ref{RNN1} and Eq. \ref{RNN3}, that equals to find out the solution to the following equation:  \begin{equation}  
\scalebox{0.85}{$ w_y w_h\begin{bmatrix}  0 & 0  & \dots  & 1 & u_h\\  0 & \dots & 1 & u_h & u_h^2 \\  0 & \dots & \dots  & \dots  & \dots \\  0 & 1 & u_h & \dots & u_h^{M-2}\\   1 & u_h & u_h^2 & \dots & u_h^{M-1} \end{bmatrix} = \begin{bmatrix}  0 & 0  & \dots  & 1 & -1\\  0 & \dots & 1 & -1 & 0 \\  0 & \dots & \dots  & \dots  & 0 \\  0 & 1 & -1 & \dots & 0\\   1 & -1 & 0 & \dots & 0  \end{bmatrix} $}
\label{RNN4}  
\end{equation} 
From Cramer's Rule \citep{strang2022introduction}, there is no solution of $u_h$ when $M>2$. Considering the $m$ neurons in RNN, the Eq. \ref{RNN4} can be formulated as follows:  
\begin{equation}  
\begin{aligned} 
\scalebox{0.50}{$
\begin{bmatrix}  0 & 0  & \dots  & \sum_{l=1}^{m}w_y^{(l)} w_h^{(l)}& \sum_{l=1}^{m}w_y^{(l)} w_h^{(l)}u_h^{(l)}\\    0 & \dots & \sum_{l=1}^{m}w_y^{(l)} w_h^{(l)} & \sum_{l=1}^{m}w_y^{(l)} w_h^{(l)}u_h^{(l)}& \sum_{l=1}^{m}w_y^{(l)} w_h^{(l)}(u_h^{(l)})^2 \\    0 & \dots & \dots  & \dots  & \dots \\     0 & \sum_{l=1}^{m}w_y^{(l)} w_h^{(l)} & \sum_{l=1}^{m}w_y^{(l)} w_h^{(l)}u_h^{(l)} & \dots & \sum_{l=1}^{m}w_y^{(l)} w_h^{(l)}(u_h^{(l)})^{M-2}\\    \sum_{l=1}^{m}w_y^{(l)} w_h^{(l)} & \sum_{l=1}^{m}w_y^{(l)} w_h^{(l)}u_h^{(l)} & \sum_{l=1}^{m}w_y^{(l)} w_h^{(l)}(u_h^{(l)})^2 & \dots & \sum_{l=1}^{m}w_y^{(l)} w_h^{(l)}(u_h^{(l)})^{M-1} \end{bmatrix} $} \\  \scalebox{0.85}{$ =  \begin{bmatrix}  0 & 0  & \dots  & 1 & -1\\  0 & \dots & 1 & -1 & 0 \\  0 & \dots & \dots  & \dots  & 0 \\  0 & 1 & -1 & \dots & 0\\   1 & -1 & 0 & \dots & 0 \end{bmatrix} $}
\end{aligned} 
\label{RNN5}   
\end{equation}

Thus, if a solution exists for Eq. \ref{RNN5}, it means that the following equations have a solution: 
\begin{equation}     
\scalebox{0.85}{$
\begin{aligned}         
&\sum_{l=1}^{m}w_y^{(l)} w_h^{(l)}(u_h^{(l)}+1)=0\\         &\sum_{l=1}^{m}w_y^{(l)} w_h^{(l)}(u_h^{(l)})^2=0 \\         &\dots \\         &\sum_{l=1}^{m}w_y^{(l)} w_h^{(l)}(u_h^{(l)})^{M-2}=0     \end{aligned}   $} 
\label{RNN6} 
\end{equation} 
To transform Eq. \ref{RNN6} into the matrix format as follows:
\begin{equation} 
\scalebox{0.75}{$\begin{bmatrix}   1+u_h^{(1)}& 1+u_h^{(2)} & \dots &1+u_h^{(m)} \\      (u_h^{(1)})^2 &  (u_h^{(2)})^2&  \dots & (u_h^{(m)})^2\\     \dots & \dots & \dots & \dots \\      (u_h^{(1)})^{M-2} &  (u_h^{(2)})^{M-2}&  \dots & (u_h^{(m)})^{M-2} \end{bmatrix}_{(M-2) \times m} \otimes    \begin{bmatrix} w_y^{(1)} w_h^{(1)} \\ w_y^{(2)} w_h^{(2)}\\ \dots \\ w_y^{(m)} w_h^{(m)}   
\end{bmatrix}=0    $}
\label{RNN7} 
\end{equation}

From Cramer’s Rule, if $(M-2)<m$, Eq. \ref{RNN7} must have non-zero solutions. In case of $(M-2)\ge m$, From the Vandermonde matrix:
\begin{equation}  \scalebox{0.75}{$
\mathbf{Rank}\begin{pmatrix} \begin{bmatrix}   1& 1 & \dots &1 \\      u_h^{(1)} &  u_h^{(2)}&  \dots & u_h^{(m)}\\     \dots & \dots & \dots & \dots \\      (u_h^{(1)})^{M-2} &  (u_h^{(2)})^{M-2}&  \dots & (u_h^{(m)})^{M-2} \end{bmatrix}_{M-1 \times m} \end{pmatrix} = m $}
\label{RNN8}  
\end{equation}
Eq. \ref{RNN8} is satisfied if and only if $u_h^{(i)} \ne u_h^{(j)}$ when $i \ne j$ \citep{klinger1967vandermonde}. Because of the independent m neurons assumption, $u_h^{(i)} \ne u_h^{(j)}, \quad \forall i \ne j$. Thus, the row vectors of the matrix term in Eq. \ref{RNN8} are linearly independent. Therefore:
\begin{equation}  
\scalebox{0.75}{$
\mathbf{Rank}
\begin{pmatrix} 
\begin{bmatrix}   1+u_h^{(1)}& 1+u_h^{(2)} & \dots &1+u_h^{(m)} \\      (u_h^{(1)})^2 &  (u_h^{(2)})^2&  \dots & (u_h^{(m)})^2\\     \dots & \dots & \dots & \dots \\      (u_h^{(1)})^{M-2} &  (u_h^{(2)})^{M-2}&  \dots & (u_h^{(m)})^{M-2} \end{bmatrix}_{(M-2) \times m}  \end{pmatrix} = m  $}
\label{RNN9}  
\end{equation} 
From Cramer's Rule, Eq. \ref{RNN7} only has the zero solution in the case of $(M-2) \ge m$. In theory, the space complexity of calculating the first-order derivative of $N$ sensors by vanilla RNN is $\mathcal{O}(3m) \approx \mathcal{O}(3M)$, while the time complexity equals $\mathcal{O}(mMN)) \approx \mathcal{O}(M^2N)$. To extend this conclusion, the space and time complexity will be at least $\mathcal{O}(3M(M-1)/2)\approx \mathcal{O}(M^2)$ and $\mathcal{O}(M^2-M)MN/2)\approx \mathcal{O}(M^3N)$, if we ascertain $(M-1)$-order derivative by stacking the vanilla RNN layers.  
\end{proof}

The preceding analysis assumes no missing data. However, in the presence of missing data, utilizing RNNs for derivative calculation may face challenges in accurately computing derivatives. This approach depends on the data's temporal features and is affected by the lack of sufficient data to capture this derivative feature. Additionally, prior knowledge of the physics law behind the data or assumptions about the underlying physics is required before devising the neural network architecture, which is impractical for unknown dynamic systems.

\begin{theorem}   
Given a single-layer neural network supplied with the first-order derivative of a time series of length $M$, it possesses the capability to compute combinations of derivatives of any order for the same $M$ length time series.
 \label{theorem2}
\end{theorem}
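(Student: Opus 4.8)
The plan is to exploit the very fact that made the RNN construction in \Cref{theorem1} so expensive and turn it into an advantage: once the first-order temporal derivative is available, every higher-order derivative is obtained by re-applying the \emph{same} linear operator. Concretely, I would start from \Cref{RNN1}, where differentiating a length-$M$ series in time is nothing but left multiplication by the fixed Toeplitz matrix $\mathbf{H}$, so that $\partial\mathbf{X}/\partial t=\mathbf{H}\mathbf{X}$. Iterating gives $\partial^m\mathbf{X}/\partial t^m=\mathbf{H}^m\mathbf{X}$, and therefore the object we are asked to realize, an arbitrary combination of derivatives of orders up to $M'$, is exactly $\sum_{m=1}^{M'}\lambda_m\mathbf{H}^m\mathbf{X}$, the left-hand side of \Cref{matrix}.

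First I would factor a single copy of $\mathbf{H}$ out of this sum, writing
\begin{equation}
\sum_{m=1}^{M'}\lambda_m\mathbf{H}^m\mathbf{X}=\Bigl(\sum_{m=1}^{M'}\lambda_m\mathbf{H}^{m-1}\Bigr)\mathbf{H}\mathbf{X}.
\end{equation}
The matrix $\mathbf{W}:=\sum_{m=1}^{M'}\lambda_m\mathbf{H}^{m-1}$ is a fixed $M\times M$ matrix, being a polynomial in the constant matrix $\mathbf{H}$, hence independent of the data $\mathbf{X}$. A single-layer linear network whose weight matrix is set to $\mathbf{W}$, and whose input is the supplied first-order derivative $\mathbf{H}\mathbf{X}$, then outputs precisely $\mathbf{W}(\mathbf{H}\mathbf{X})=\sum_{m=1}^{M'}\lambda_m\mathbf{H}^m\mathbf{X}$. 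Since $\lambda_1,\dots,\lambda_{M'}$ and the cutoff $M'$ were arbitrary, the same construction realizes any prescribed combination, which is the claim.

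The structural reason this succeeds with one layer, rather than the $\mathcal{O}(M^2)$ stacked neurons forced by \Cref{theorem1}, is the semigroup property $\mathbf{H}^{a}\mathbf{H}^{b}=\mathbf{H}^{a+b}$: composing derivatives is matrix multiplication, and a linear combination of such compositions is again a single linear map. Note that factoring out one $\mathbf{H}$ lets us absorb all required orders into $\mathbf{W}$ without ever inverting $\mathbf{H}$, which is legitimate because the combinations in \Cref{matrix} all involve at least the first derivative; the costly step of \emph{producing} the first-order derivative has already been paid for by hypothesis. I expect the only point needing care to be the bookkeeping that $\mathbf{W}$ is genuinely a single constant matrix valid for all inputs simultaneously, and, to make the ``any order'' clause sharp, the observation that orders beyond the degree of the minimal polynomial of $\mathbf{H}$ collapse onto lower ones by Cayley--Hamilton, so that $\{I,\mathbf{H},\mathbf{H}^2,\dots\}$ already spans every achievable combination. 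Should one wish to include the zeroth-order term as well, it suffices to note that $\mathbf{H}$ is invertible, its inverse being the cumulative-sum operator, so $\mathbf{W}$ may carry an $\mathbf{H}^{-1}$ contribution and the argument extends verbatim; an identical reasoning covers spatial derivatives by replacing left multiplication by $\mathbf{H}$ with right multiplication by the Laplacian $\mathbf{L}$.
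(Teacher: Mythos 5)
Your proposal is correct and takes essentially the same route as the paper: both arguments reduce to solving $\mathbf{W}\mathbf{H}=\sum_{m}\lambda_m\mathbf{H}^m$ using the fact that $\mathbf{H}$ has full rank $M$, so that $\mathbf{W}=\sum_{m}\lambda_m\mathbf{H}^{m-1}$ exists (and is unique). Your explicit factoring-out of one copy of $\mathbf{H}$ just writes down directly the solution whose existence the paper establishes via its rank argument on $[\mathbf{H},\mathbf{H}^2,\cdots,\mathbf{H}^{M-1}]$.
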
    
\begin{proof}  For the convenient of the following discussion, we adopt $\mathbf{H}$ as the Block matrix. The first-order derivative vector $\mathbf{HX}$ is input to the single-layer neural network, ignoring the activation function. The operation can be defined as follows:  \begin{equation}    g(\mathbf{X}) = \mathbf{W}\mathbf{H}\mathbf{X}   \label{single}  \end{equation}   Where $\mathbf{W} \in \mathbb{R}^{M \times M}$ are the learnable parameters. Therefore, the proof is equivalent to $\forall \mathbf{\lambda}=[\lambda_{1},\lambda_{2},\cdots,\lambda_{M-1}]$, $\exists \mathbf{W}$  meets the following equation: 
\begin{equation}    
\begin{aligned}    
\mathbf{W} \mathbf{H} &= [\mathbf{H},\mathbf{H}^2,\cdots,\mathbf{H}^{M-1}]*[\lambda_{1},\lambda_{2},\cdots,\lambda_{M-1}]^T \\
&=\lambda_{1}\mathbf{H} + \lambda_{2}\mathbf{H}^2 + \dots + \lambda_{M-1}\mathbf{H}^{M-1}   
\end{aligned}
\label{Teoplitz} 
\end{equation}  
$Rank(\mathbf{H})=M$ and the product of two lower triangular matrices is also a lower triangular matrix \citep{strang2022introduction}. $Rank([\mathbf{H},\mathbf{H}^2,\cdots,\mathbf{H}^{M-1}])=M$, thus, $[\mathbf{H},\mathbf{H}^2,\cdots,\mathbf{H}^{M-1}]$ is linear independent. Moreover, as $Rank(\mathbf{H})=Rank([\mathbf{H},\mathbf{H},\mathbf{H}^2,\cdots,\mathbf{H}^{M-1}])=M$, for $\forall \mathbf{\lambda}$, Equation \ref{Teoplitz} must have a unique solution. 
\end{proof}

The theorem \ref{theorem2} indicates that we can using the single-layer neural network supplied with Teoplitz matrix  dynamically learns the physics law from the dataset, making it more adaptive to different datasets without prior assumptions about the underlying physics.

Instead of calculating the combination of High-order Toeplitz matrices in Eq. \ref{missmatrix} directly, a single-layer neural network can be used to learn the combination of different orders of Toeplitz matrices. In addition, calculating the combination of mixed $(M-1)$-order Toeplitz matrix by the learnable parameter matrix $\mathbf{W} \in \mathbb{R}^{M \times M}$, the time complexity can be reduced from $\mathcal{O}(M^{3(M-2)}N)$ (calculating the high order matrix directly) to $\mathcal{O}(M^3N)$, but the space complexity of will be increased from $\mathcal{O}(M-1)$ to $\mathcal{O}(M^2)$.

\subsection{Computational complexity} The time complexity of calculating the $\mathbf{L}^k$ and $\mathbf{H}^m$ directly can be represented as $\mathcal{O}(N^{3(k-1)}+M^{3(m-1)})$. Instead, the calculation of the $k$-hop order GCN neighborhood can be approximated by the Chebyshev polynomial to reduce the time complexity \citep{kipf2016semi}. In each propagation, the time for recursive sparse-dense matrix multiplication can be reduced to $\mathcal{O}(|\mathcal{E}|)$. However, calculating the $m$-order Toeplitz matrix has a high computational time complexity. Instead of using learning matrix $\mathbf{W} \in \mathbb{R}^{M \times M}$ in Eq. \ref{single}, $k_t$ dimension 1D convolutional filter is used to approximate the mix order Toeplitz matrix $\mathbf{WHX}$ as $conv(\mathbf{HX})$. Through this method, we can reduce the time complexity from $\mathcal{O}(M^3N)$ to $\mathcal{O}(k_tM^2N)$, while the space complexity can be reduced from $\mathcal{O}(M^2)$ to $\mathcal{O}(k_t)$ with $k_t \ll M$. Thus, the total time complexity of calculating the mix of high-order $\mathbf{L}$ and $\mathbf{H}$ can be reduced from $\mathcal{O}(N^{3(k-1)}M+M^{3(m-1)}N)$ to $\mathcal{O}(|\mathcal{E}|M +k_tM^2N)$ with $k_t \ll M$. 

\subsection{The future state prediction stage} If we obtain the ground truth values of the missing data in the training stage, then we can apply the error $|\mathbf{P}_t -\hat{\mathbf{P}}_t|$ for the backpropagation algorithm in the training stage. Unfortunately, the ground truth values of the missing data are only available for the evaluation stage since we can never obtain the missing values in real applications. Instead, we applied the imputation value $\hat{\mathbf{P}}_{t-M:t}$ and $\mathbf{U}_{t-M:t}$ to predict the future no missing data in the training step. To predict future values, we adopt the LSTM and Temporal Attention (TAtt) \citep{guo2019attention} in this stage. Hence, the predicted values and loss function can be formulated as follows: 
\begin{equation}  
\begin{aligned}  
&\hat{\mathbf{X}}_{t+1:t+M}=\mathrm{TAtt}(\mathrm{LSTM}(\hat{\mathbf{P}}_{t-M:t}+\mathbf{U}_{t-M:t}))\\  
&\mathcal{L}=l_{1loss}((\mathbf{X}_{t+1:t+M}-\hat{\mathbf{X}}_{t+1:t+M})\odot (1-\mathbf{M}_{t+1:t+M}))  
\end{aligned} 
\label{objectiveFunction}
\end{equation} 

\subsection{Explainability for HSPGNN}
Explainability in GNNs is essential to various applications. While utilizing the principles of physics aids in attaining a certain level of explainability regarding the imputation mechanism, it remains crucial to assess the significance of different nodes within the entire graph. This evaluation is pivotal for practical applications, as not all nodes exert the same influence. One promising approach to GNN explanation lies in instance-level explanations, which attempt to explain GNN by identifying important input features for its prediction. Assessing the impact of missing nodes serves as an effective means to glean insights into GNN operations. Analogous to the equilibrium forces in physical laws, analyzing the influence of each node across the entire graph by evaluating its absence is instrumental. Fidelity \cite{pope2019explainability} is an important metric of explainability in GNN, which follows the intuition that removing the truly important features will significantly decrease the model performance. In this paper, we evaluate fidelity by the missing impact $\mathbf{Y}$ of nodes, which can be defined as:
\begin{equation}
 \mathbf{Y}= \text{HSPGNN} (\mathbf{P})=\text{HSPGNN} (\mathbf{X}) -\text{HSPGNN} (\mathbf{U})
\label{missingEffect}
\end{equation}
However, it is difficult to calculate the missing impact since many datasets we obtained often contain originally missing data. Therefore, $\mathbf{X}$ is difficult to obtain in some situations. Instead,
to represent useful information about the missing values themselves, we assume that the missing value $\mathbf{P}$ and the corresponding missing impact $\mathbf{Y}$ are associated with distribution $\mathcal{D}$  with a latent variable $\mathbf{Z}$. Thus, our goal is to optimize the useful information expectation as follows:
\begin{equation}
\underset{\mathcal{D}_{\mathbf{Z}}}{\text{argmax}}\mathbb{E}_{\mathbf{Z}} \left [ p(\mathbf{Y}|\mathbf{P},\mathbf{Z}) \right ] 
\label{expectation}
\end{equation}
By Bayes' theorem, Eq. \ref{expectation} is equivalent to:
\begin{equation} 
\underset{\mathcal{D}_{\mathbf{Z}}}{\text{argmax}}\mathbb{E}_{\mathbf{Z}} [ \ln{p(\mathbf{Z}|\mathbf{P},\mathbf{Y}) - \ln{p(\mathbf{P},\mathbf{Z})}}+ \underbrace{\ln{p(\mathbf{P},\mathbf{Y})}}_{Evidence} ]  
\label{expectation1} 
\end{equation}
However, the posterior distribution $p(\mathbf{Z}|\mathbf{P},\mathbf{Y})$ is intractable. To make it feasible,  we approximate it by the observed value as $p(\mathbf{Z}|\mathbf{P},\mathbf{Y}) \approx p(\mathbf{Z}|\mathbf{P}_{obs},\mathbf{Y}_{obs})$, where $\mathbf{P}_{obs}$ and $\mathbf{Y}_{obs}$ are denoted as the observed missing value and the observed missing impact. 
 Also, we adopt Normalizing Flows (NF) to construct more accurate posterior distributions. NF \cite{rezende2015variational} provides a way of constructing flexible probability distributions by converting a simple distribution (e.g., uniform or Gaussian) into a complex one and can, in theory, approximate any complicated distribution. The mechanism can be formulated as:
\begin{equation}  
\ln{q_K(\mathbf{Z}_K)}=\ln{q_0(\mathbf{Z}_0)}- \sum_{k=1}^{K}\ln{\text{det} \left | \frac{\partial f_k}{\partial \mathbf{Z}_k} \right | },
\label{Norm}
\end{equation}
where $\text{det} \left | \frac{\partial f_k}{\partial \mathbf{Z}_k} \right |$ is the Jacobians of functions $f$.
Thus, from Eq. \ref{Norm}, ignoring the evidence term, we can formulate Eq. \ref{expectation1} as:
\begin{equation}
\begin{aligned}
\underset{\mathcal{D}_{\mathbf{Z}}}{\text{argmax}}\mathbb{E}_{\mathbf{Z}} \left [ p(\mathbf{P},\mathbf{Y}|\mathbf{Z}) \right ] \propto 
\underset{\mathcal{D}_{\mathbf{Z}_K}}{\text{argmax}}&\mathbb{E}_{\mathbf{Z}_0}  [ -\ln{p(\mathbf{P}_{obs},\mathbf{Z}_K)} \\
 + \ln{q_0(\mathbf{Z}_0)}& - \sum_{k=1}^{K}\ln{\text{det} \left | \frac{\partial f_k}{\partial \mathbf{Z}_k} \right | }  ] 
\end{aligned}
\label{FinalExpectation} 
\end{equation}

In this study, we adopt planar flow $f$ and initialization $\mathbf{Z}_0$ as:
\begin{equation} 
\begin{aligned}
f(\mathbf{Z})= &\mathbf{Z} + \mathbf{U}_0 tanh( \mathbf{U}_1^T \mathbf{Z}+\mathbf{b})\\
 &\mathbf{Z}_0\sim \mathcal{N}(0, \mathbf{I}),
\end{aligned}
\end{equation}
where $\mathbf{U}_0, \mathbf{U}_1,\mathbf{b}$ are the learnable parameters and $tanh$ is the activation function. Through Eq. \ref{FinalExpectation}, we can optimize the parameters and the distribution $\mathcal{D}_\mathbf{Z}$, which can indicate the impact effect of each node with originally missing datasets.

\section{Experiments}
\textbf{Datasets and experimental setting:} To evaluate the effectiveness of HSPGNN, four datasets from different domains with different graph structures are collected for the experiments: (1) Air Quality (AQI) \citep{yi2016st}: This dataset is a benchmark for the time series imputation from the Urban Computing project of 437 air quality monitoring stations over 43 cities in China. AQI-36 is the reduced set of 36 stations from AQI. (2) PeMS-BAY \citep{li2017diffusion}: PeMS-BAY is a famous traffic dataset in spatio-temporal problems. (3) Electricity \citep{misc_electricityloaddiagrams20112014_321}: This is a public dataset that is released by UCI. For AQI and AQI-36 datasets, we applied the same evaluation protocol of the baseline \citet{yi2016st} for comparison. For PeMS-BAY and Electricity datasets, we emulate block missing (randomly drop 5\% of the available data for each sensor and a failure of probability pfailure = 0.15\% with a uniform duration between 12 and 48-time steps) and point missing mode (randomly drop 25\% available data) settings as the previous work of \citet{andrea2021filling}. Notably, in both cases, the model does not have access to the ground-truth values of the missing data used for the final evaluation. Also, we denote HSPGNN-L as adding one more LSTM and Temporal Attention layer than the HSPGNN model. To better consider the performance of our method, we only consider the out-of-sample scenarios in all datasets, which means the training and evaluation sequences are disjointed. To facilitate comparison with baseline methods, we adopt the same metrics standard as used in previous baseline studies as GRIN \citep{andrea2021filling}. All experiments are conducted on Pytorch 1.21.1 based on a Linux server (CPU: Intel(R) Core(TM) i7-1800H @2.30GH, GPU: NVIDIA GeForce RTX 3080) with 32G memory. Adam Optimization is utilized at an initial learning rate of 0.0005 with decay rates of 0.92 every epoch. The batch size is 8 in the PeMS-BAY dataset, while the others are 16. $M$ and $K_t$ are set to 60 and 3, and residual connections are applied to reduce the gradient vanishing problem. $L1$ is adopted as the loss function. 60 samples are used to train our model. Also, for all datasets, 16\% of the training set is used for the validation set. As for the node importance experiment, we set the $K$ of norm flow as 480, the batch size is 100 and the learning rate is 0.0001.

\textbf{Results:} Tab. \ref{tab1} show the results of experiments on the four datasets. The results indicate that our model can achieve the best performance except for 25\% points missing in PeMS-BAY. As for the traditional spatio-temporal dataset as PeMS-BAY, deep learning methods outperform other methods for their excellent capability of automatically extracting relevant features from raw data. Among all deep learning methods, GRIN and HSPGNN achieve the best performance since they consider both spatio-temporal features rather than those only representing the temporal features. HSPGNN assumes the linear relationship to approximate the real dynamic system. However, we believe that real scenarios are more complex than those represented by a linear model. In situations where the missing rate is not high or there are simple missing patterns, data-driven models like GRIN may effectively leverage the available data to capture nonlinear relationships. Consequently, HSPGNN does not achieve the best performance when 25\% of the data points are missing. From Fig. \ref{MAE1}, as the randomly missing rate increases, the MAE of all models increases. Clearly, multivariate time series imputation is still a challenging problem. But our physics-incorporated model can obtain better robustness with the least change and the best performance over 80\% missing rate as well as block missing scenarios since it takes both the deep learning model and physics model into consideration. As for the real missing dataset AQI and AQI-36 that contain real various missing patterns, our model obtains the best result in Tab. \ref{tab1}, which can fully demonstrate the superiority of our model in complex missing data situations. 

As for the Electricity dataset, traditional methods such as MF have a solid theory foundation, but the intrinsic relationships of nodes are Non-grid structures, which makes it difficult to represent the latent graph structure of time series data. Hence, the imputation performance on time series data is unsatisfactory. GRIN is constrained to learn how to perform imputation by looking at static neighboring nodes, which makes it hard to fully exploit the dynamic features of graphs. From Fig. \ref{MAE2}, our model achieves the best performance at different missing rates since it can capture the dynamic relationship through spatial attention and represent the dynamic physics-incorporated system effectively.

\begin{table*}[!hbt]  
\small
\centering 
\caption{The imputation results of different methods on different datasets.} 
\scalebox{0.6}{
\begin{tabular}{lllllllllllll}
\hline
\multicolumn{1}{c|}{}       & \multicolumn{4}{c}{PeMS\_BAY}                                                                         & \multicolumn{4}{c|}{Electricity}                                                                      & \multicolumn{2}{c}{\multirow{2}{*}{AQI}}          & \multicolumn{2}{c}{\multirow{2}{*}{AQI-36}}       \\ \cline{2-9}
\multicolumn{1}{c|}{Models} & \multicolumn{2}{c}{Block missing}                 & \multicolumn{2}{c}{Point missing}                 & \multicolumn{2}{c}{Block missing}                 & \multicolumn{2}{c|}{Point missing}                & \multicolumn{2}{c}{}                              & \multicolumn{2}{c}{}                              \\ \cline{2-13} 
\multicolumn{1}{c|}{}       & \multicolumn{1}{c}{MAE} & \multicolumn{1}{c}{MSE} & \multicolumn{1}{c}{MAE} & \multicolumn{1}{c}{MSE} & \multicolumn{1}{c}{MAE} & \multicolumn{1}{c}{MSE} & \multicolumn{1}{c}{MAE} & \multicolumn{1}{c}{MSE} & \multicolumn{1}{c}{MAE} & \multicolumn{1}{c}{MSE} & \multicolumn{1}{c}{MAE} & \multicolumn{1}{c}{MSE} \\ \hline
Mean                        & 5.46±0.00               & 87.56±0.00              & 5.42±0.00               & 86.59±0.00              & 1.73±0.00               & 5.15±0.00               & 1.74±0.00               & 5.18±0.00               & 39.60±0.00              & 3231.04±0.00            & 53.48±0.00              & 4578.08±0.00            \\
KNN \citep{cover1967nearest}                          & 4.30±0.00               & 49.90±0.00              & 4.30±0.00               & 49.80±0.00              & 0.62±0.00               & 1.45±0.00               & 0.63±0.00               & 1.54±0.00               & 34.10±0.00              & 3471.14±0.00            & 30.21±0.00              & 2892.31±0.00            \\
MF \citep{cichocki2009fast}                           & 3.28±0.01               & 50.14±0.13              & 3.29±0.01               & 51.39±0.64              & 0.22±0.01               & 0.13±0.01               & 0.23±0.01               & 0.14±0.01               & 26.74±0.24              & 2021.44±27.98           & 30.54±0.26              & 2763.06±63.35           \\
MICE \citep{white2011multiple}                         & 2.94±0.02               & 28.28±0.37              & 3.09±0.02               & 31.43±0.41              & 0.33±0.02               & 0.39±0.03               & 0.51±0.01               & 1.06±0.02               & 26.98±0.10              & 1930.92±10.08           & 30.37±0.09              & 2594.06±7.17            \\
MPGRU \citep{li2017diffusion}                       & 1.59±0.00               & 14.19±0.11              & 1.11±0.00               & 7.59±0.02               & 0.32±0.02               & 0.28±0.04               & 0.33±0.02               & 0.32±0.01               & 18.76±0.11              & 1194.35±15.23           & 16.79±0.52              & 1103.04±106.83          \\
Transformer \citep{vaswani2017advances}                  & 1.70±0.02               & 10.41±0.05              & 0.74±0.00               & 2.23±0.00               & 0.92±0.01               & 1.96±0.01               & 0.84±0.02               & 1.74±0.02               & 18.13±0.21              & 1004.35±18.36           & 12.00±0.60              & 519.46±54.34            \\
BRITS \citep{cao2018brits}                       & 1.70±0.01               & 10.50±0.07              & 1.47±0.00               & 7.94±0.03               & 1.02±0.01               & 1.04±0.01               & 0.93±0.01               & 1.98±0.02               & 20.21±0.22              & 1157.89±25.66           & 14.50±0.35              & 662.36±65.16            \\
GRIN \citep{andrea2021filling}                        & 1.14±0.01               & 6.60±0.10               & \textbf{0.67±0.00}      & \textbf{1.55±0.01}      & 0.60±0.00               & 1.32±0.02               & 0.56±0.00               & 1.24±0.02               & 14.73±0.15              & 775.91±28.49            & 12.08±0.47              & 523.14±57.17            \\
M$^2$DMTF \citep{fan2021multi}                   & 2.49±0.02               & 19.76±0.12              & 2.51±0.02               & 19.89±0.42              & 0.36±0.01               & 0.75±0.01               & 0.33±0.01               & 0.64±0.01               & 28.91±0.25              & 2150.00±28.25           & 14.61±0.49              & 737.41±45.34            \\
SAITS \citep{du2023saits}                       & 1.56±0.01               & 9.67±0.04               & 1.40±0.03               & 9.85±0.07               & 0.87±0.00               & 1.82±0.01               & 0.77±0.00               & 1.43±0.01               & 21.32±0.15              & 1234.24±53.24           & 18.13±0.35              & 1223.74±123.36          \\
Nhits \citep{challu2023nhits}                       & 1.52±0.01               & 11.22±0.11              & 1.89 ±0.01              & 16.15±0.13              & 0.42±0.01               & 0.11±0.01               & 0.37±0.01               & 0.29±0.01               & 24.44±0.25              & 1836.87±28.48           & 24.35±0.15              & 1398.794±24.43          \\
TDM\citep{zhao2023transformed}                         & 2.70±0.01               & 21.89±0.32              & 2.28±0.01               & 14.14±0.41              & 0.73±0.01               & 1.51±0.01               & 0.54±0.01               & 0.64±0.01               & 20.76±0.46              & 881.06±25.31            & 31.94±0.53              & 1644.66±166.34 
\\ \hline
\textbf{HSPGNN-L-ours}      & \textbf{1.10±0.02}      & \textbf{5.95±0.20}      & 0.78±0.02               & 2.35±0.20               & 0.18±0.01               & 0.20±0.01               & 0.11±0.01               & 0.03±0.01               & 13.30±0.20              & 598.91±26.85            & 11.25±0.23              & 439.72±53.35            \\
\textbf{HSPGNN-ours}        & 1.20±0.03               & 7.26±0.60               & 0.79±0.03               & 2.15±0.30               & \textbf{0.15±0.01}      & \textbf{0.10±0.01}      & \textbf{0.10±0.01}      & \textbf{0.02±0.01}      & \textbf{12.85±0.12}     & \textbf{574.09±23.78}   & \textbf{11.19±0.20}     & \textbf{438.82±51.43}   \\ \hline
\label{tab1} 
\end{tabular}}
\end{table*}

\begin{figure}[!hbt] 
\subfloat[PeMS-BAY.]{ 
\includegraphics[width=0.49\columnwidth]{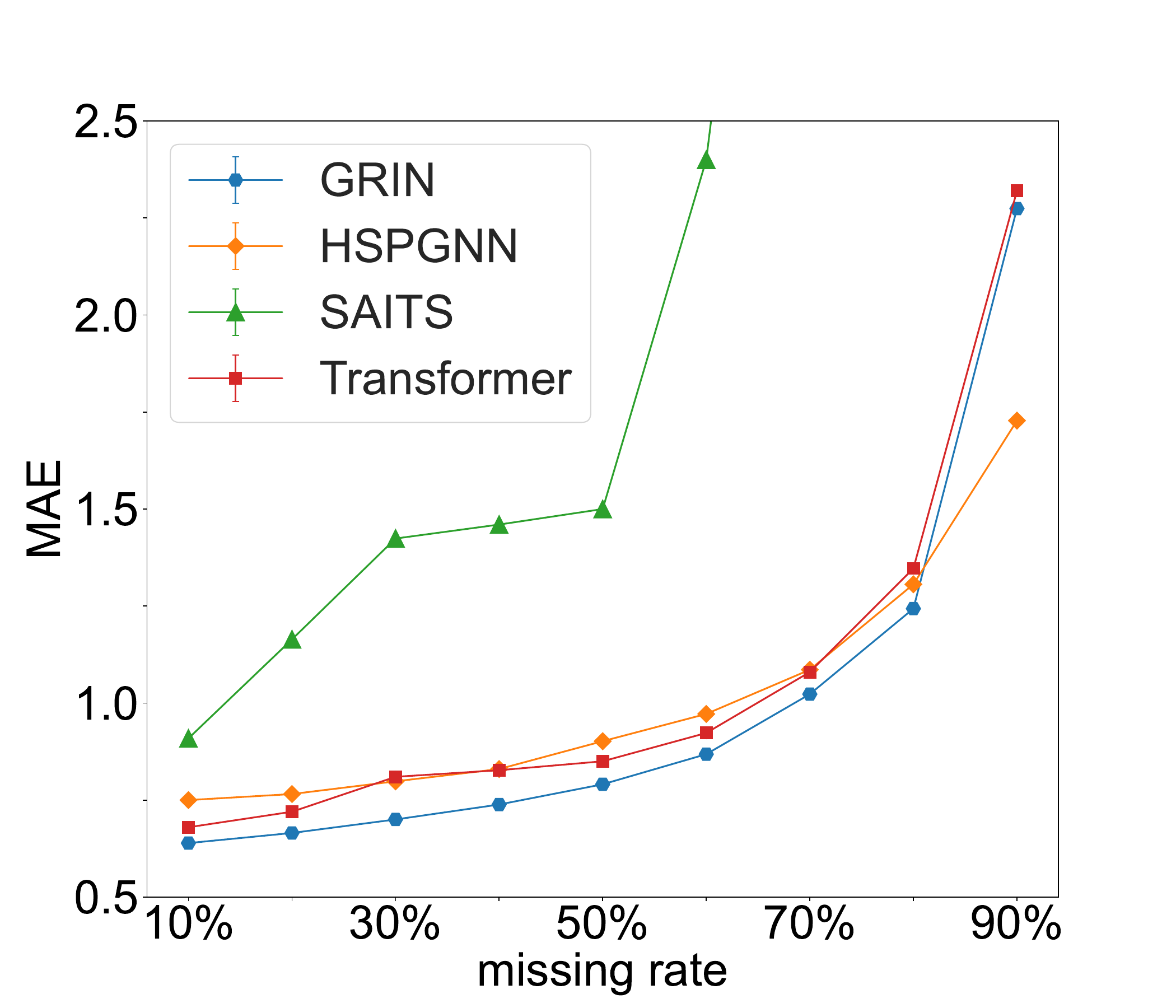} 
\label{MAE1} 
} 
\subfloat[Electricity.]{ 
\includegraphics[width=0.49\columnwidth]{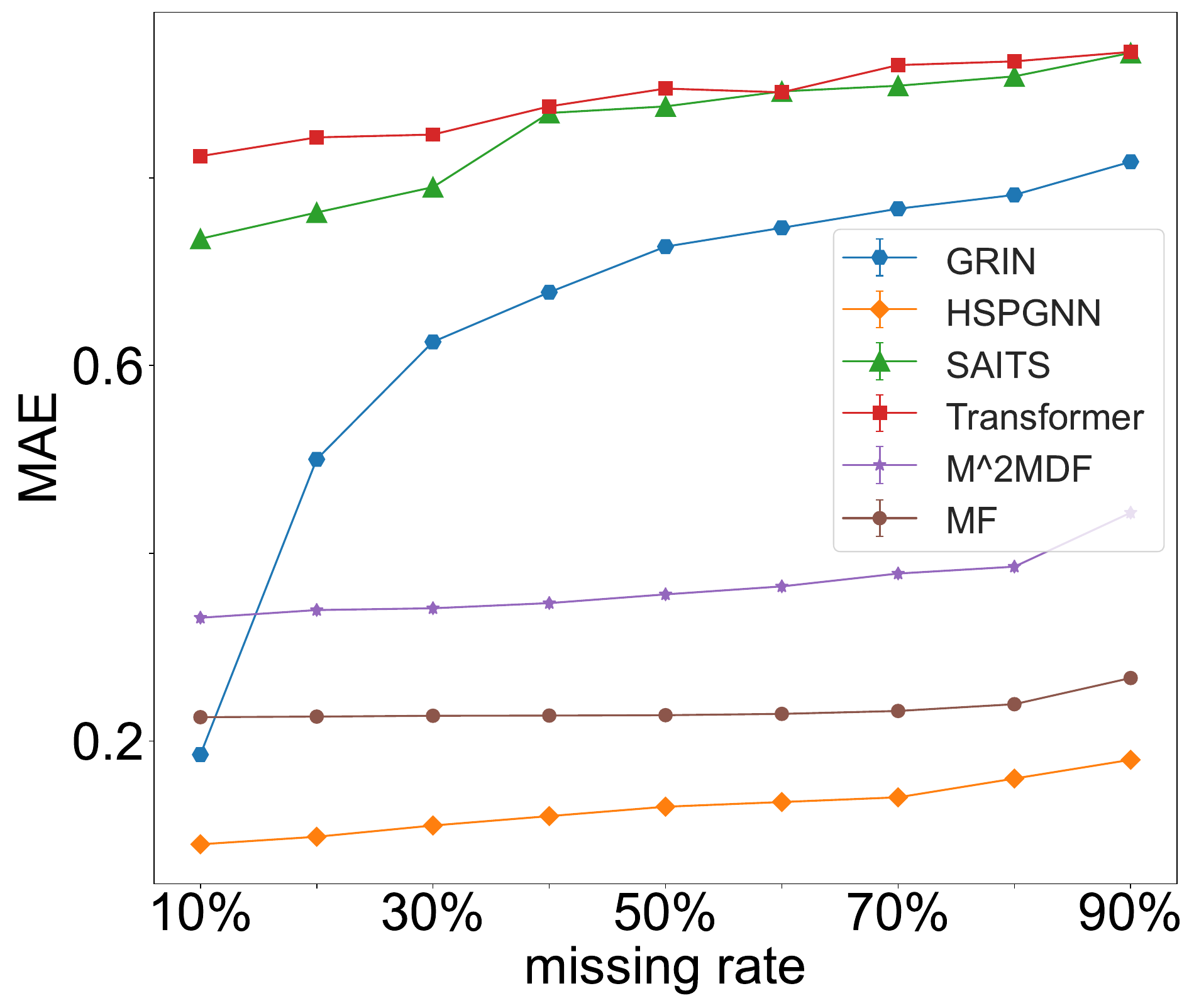}  
\label{MAE2} 
} 
\caption{MAE of different models at different missing rates on PeMS-BAY and Electricity datasets.}
\end{figure} 

\textbf{The impact of K-hop combination :} To explore the impact of different orders of neighbor nodes, the K-hop combination experiments are conducted. The results are shown in Tab. \ref{khop}. From the table, only considering one order of the physics equation does not lead to the best result. In real applications, the interaction between the neighbor nodes is not limited to the first-order neighbor but also the multi-hop neighbors. Therefore, the data naturally obey multi-physical laws with a mixture of different orders. For example, only considering such as the LWR model in the PeMS-BAY dataset or the wave propagation model in the AQI dataset makes it easy to ignore the effect of other order neighbors and decrease the performance of the model. Interestingly, the combination numbers of the best performance on different datasets are different. The result shows that combining the first and second hop in the large graph, such as in AQI and PeMS datasets, achieves the best performance, while the small graph in AQI-36 needs 3-hop to obtain the best performance. Thus, the results fully prove the existence of multi-hop physical correlations in the time series data.

\textbf{Time complexity result:} 
Experiments are conducted to compare the computation time of different methods to calculate the derivative of time in the training stage. The experiments are conducted based on the experiment setting, Also, we use the Vanilla RNN to calculate the $M-1$ derivative of time. The results shown in Tab. \ref{running_time} Verifies the effectiveness of our model.

\begin{table}[!hbt] 
\caption{The result of K hop.}
\centering 
\scalebox{0.85}{
\begin{tabular}{llll} \hline \multicolumn{1}{l|}{\multirow{2}{*}{Order of Neighbors}} & \multicolumn{1}{c}{AQI} & \multicolumn{1}{c}{AQI-36} & \multicolumn{1}{c}{PEMS\_BAY} \\ \cline{2-4}  \multicolumn{1}{l|}{}                                             & \multicolumn{1}{c}{MAE} & \multicolumn{1}{c}{MAE}    & \multicolumn{1}{c}{MAE}       \\ \hline K=1                                                      & 13.22±0.11                       & 11.57±0.20                         & 1.30±0.03                              \\ K=2                                                      & 13.17±0.12                       & 11.63±0.23                          & 1.27±0.03                              \\ K=3                                                      & 13.13±0.11                       & 11.71±0.21                          & 1.24±0.03                              \\ K=4                                                      & 13.15±0.10                       & 11.80±0.30                          & 1.25±0.02                              \\ K={[}1,2{]}                                              & \textbf{12.85±0.12}              & 11.27±0.23                          & \textbf{1.10±0.02}                     \\ K={[}1,2,3{]}                                            & 13.07±0.13                       & \textbf{11.19±0.20}                 & 1.13±0.02                              \\ K={[}1,2,3,4{]}                                          & 13.02±0.14                       & 11.20±0.21                          & 1.17±0.02                              \\ \hline 
\label{khop} 
\end{tabular}}
\end{table}

\begin{table}[!hbt] 
\caption{The result of running time experiments.}
\centering 
\scalebox{0.80}{
\begin{tabular}{llll}
\hline
Dataset(ms/epoch)                                       & Vanilla RNN & Calculate directly & Our method         \\ \hline
AQI                                                     & 32.42±15.64 & 314.21±75.64       & \textbf{3.99±2.65} \\
PeMS-BAY                                                & 75.54±32.13 & 482.54±95.64       & \textbf{6.94±2.43} \\
Electricity                                             & 69.43±28.33 & 448.06±35.43       & \textbf{6.45±2.31} \\ \hline
\label{running_time} 
\end{tabular}}
\end{table}

\textbf{Explainability of HSPGNN:} 
As for optical flow-based moving object detection in the computer vision domain, provided that in the neighborhood of pixel, change of brightness intensity $I(x,y,t)$ does not happen motion field and ignores the high order term, we can use the following expression the moving object can be formulated as follows \citep{aslani2013optical}: 
\begin{equation} 
V_t = -\frac{\partial I(x,y,t)}{\partial t}/\triangledown I(x,y,t).
\end{equation}
Where $\triangledown I(x,y,t)$ is so-called spatial gradient of brightness intensity and $V_t$ is the optical flow (velocity vector) of the image pixel. Optical flow can also be defined as the distribution of apparent velocities of movement of brightness pattern in an image. Similarly, ignoring the higher-order term and external source, we can obtain a similar graph-like optical flow from Eq. \ref{matrix} as follows: 
\begin{equation}  
\scalebox{0.9}{$
\mathbf{V}^{ij}_t=-\frac{ \mathbf{X}^i_t - \mathbf{X}^i_{t-1}}{\mathbf{X}_t^i - \mathbf{X}_{t}^j}=-\frac{\lambda_1 }{\mathbf{\Theta}_1 \mathbf{L}_t^{ij}}. $} 
\end{equation} 
where $\mathbf{V}^{ij}_t$ represent the graph-like optical flow from $i$-th node to $j$-th node at time $t$, while $\lambda_1$, $\mathbf{\Theta_1}$ and $\mathbf{\mathbf{L}_t^{ij}}$ are the learnable parameters of the neural network and Laplacian matrix. Thus, we can obtain the transfer speeds between the nodes in graph. We draw the graph-like optical flow in AQI and PeMS-BAY datasets. As shown in Fig. \ref{optical2}, the red circle is the center node, while the blue, yellow, dark green, and cyan circles correspond to their first to fourth neighbors, respectively. The thickness of the lines reflects the optical flow in graphs. The thicker the lines are, the higher the speeds of the neighbor node transfer. As for the AQI dataset, we chose the 219-th node as the center node for three different months. At the same time step, the optical flows between some nodes are different and even disappear, although they are of the same order neighbors. At different time steps, some disappearing lines will emerge again. Also, the optical flows between some neighbor nodes are stable with little change over time, while others change dramatically. As for the PeMS-BAY dataset, we chose the 20th node as the center node with more neighbor nodes at three adjacent weeks. Unsurprisingly, We can observe the same phenomenon as the AQI dataset, which is useful in analyzing the air pollutants or traffic flow transition except only filling the missing data. Hence, our model can discover the dynamic relationship by incorporating the physics layer, which not only makes our model more robust to different data missing patterns but also more explainable to the internal physical relationships hidden in the datasets.

\begin{figure}[!hbt]  
\centering 
\subfloat[At March.]{  
\centering 	 
\includegraphics[width=0.33\columnwidth]{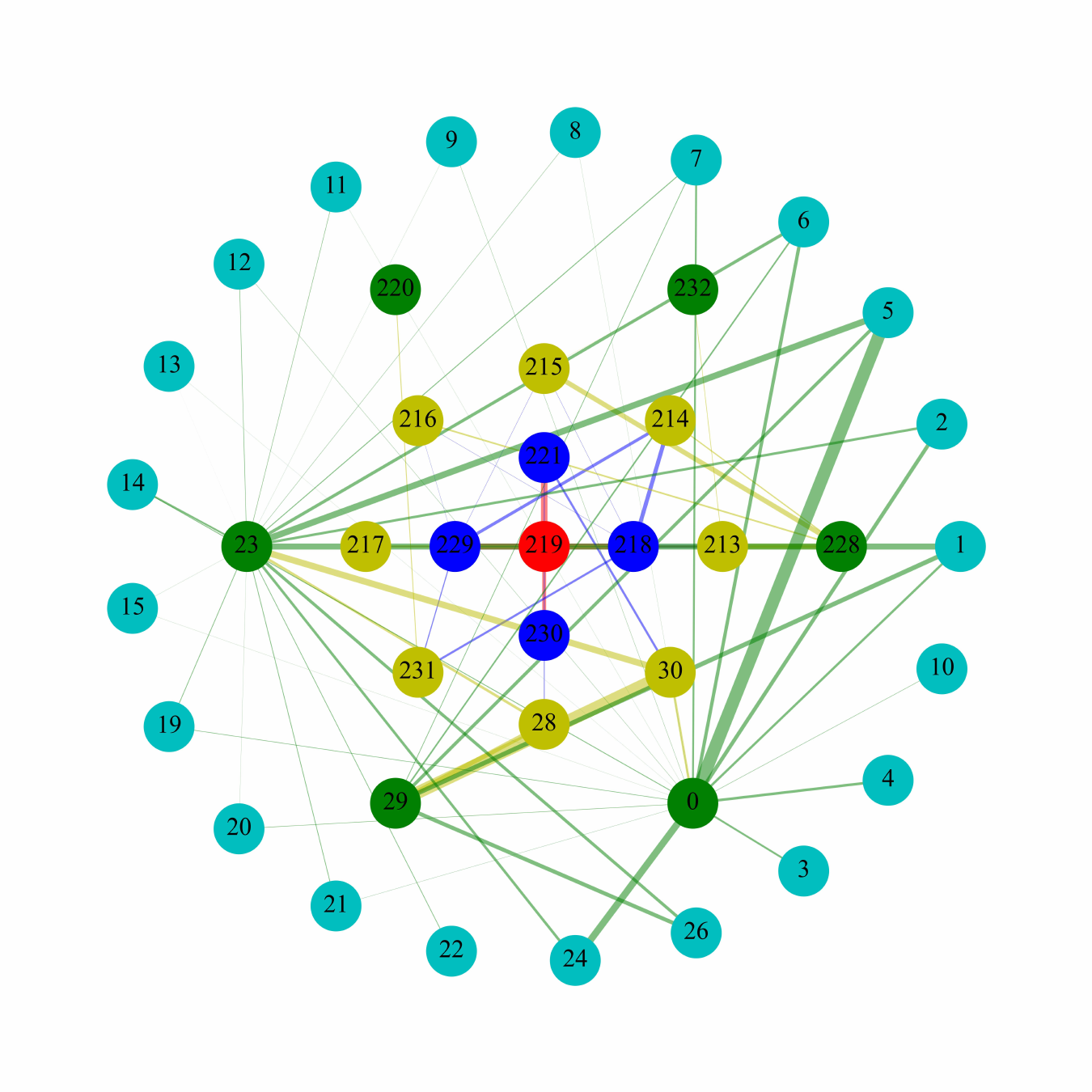} 	
\label{baseline_curve_d} }   
\subfloat[At June.]{ 
\centering 
\includegraphics[width=0.33\columnwidth]{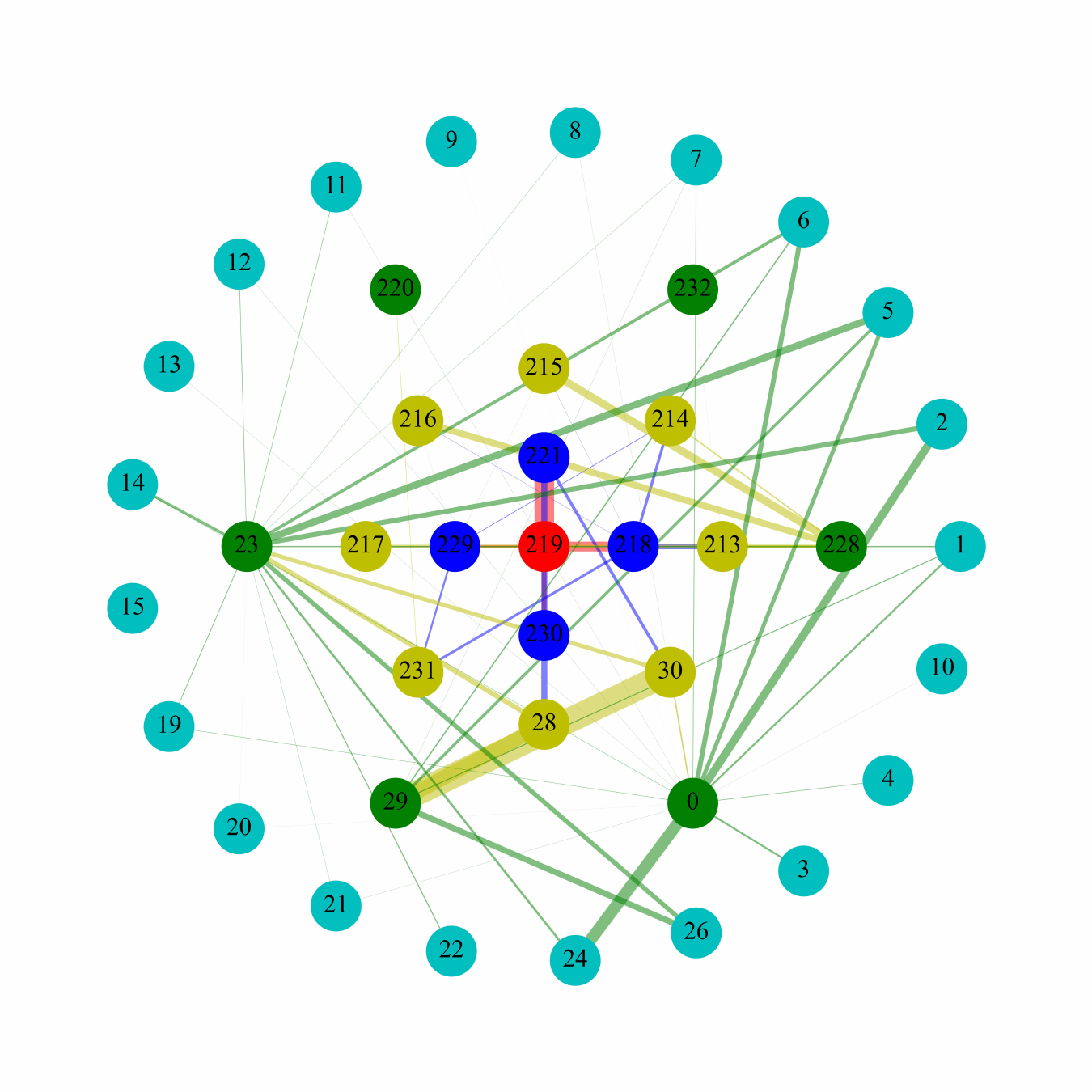} 
\label{baseline_curve_e} } 
\subfloat[At September.]{ 
\centering  
\includegraphics[width=0.33\columnwidth]{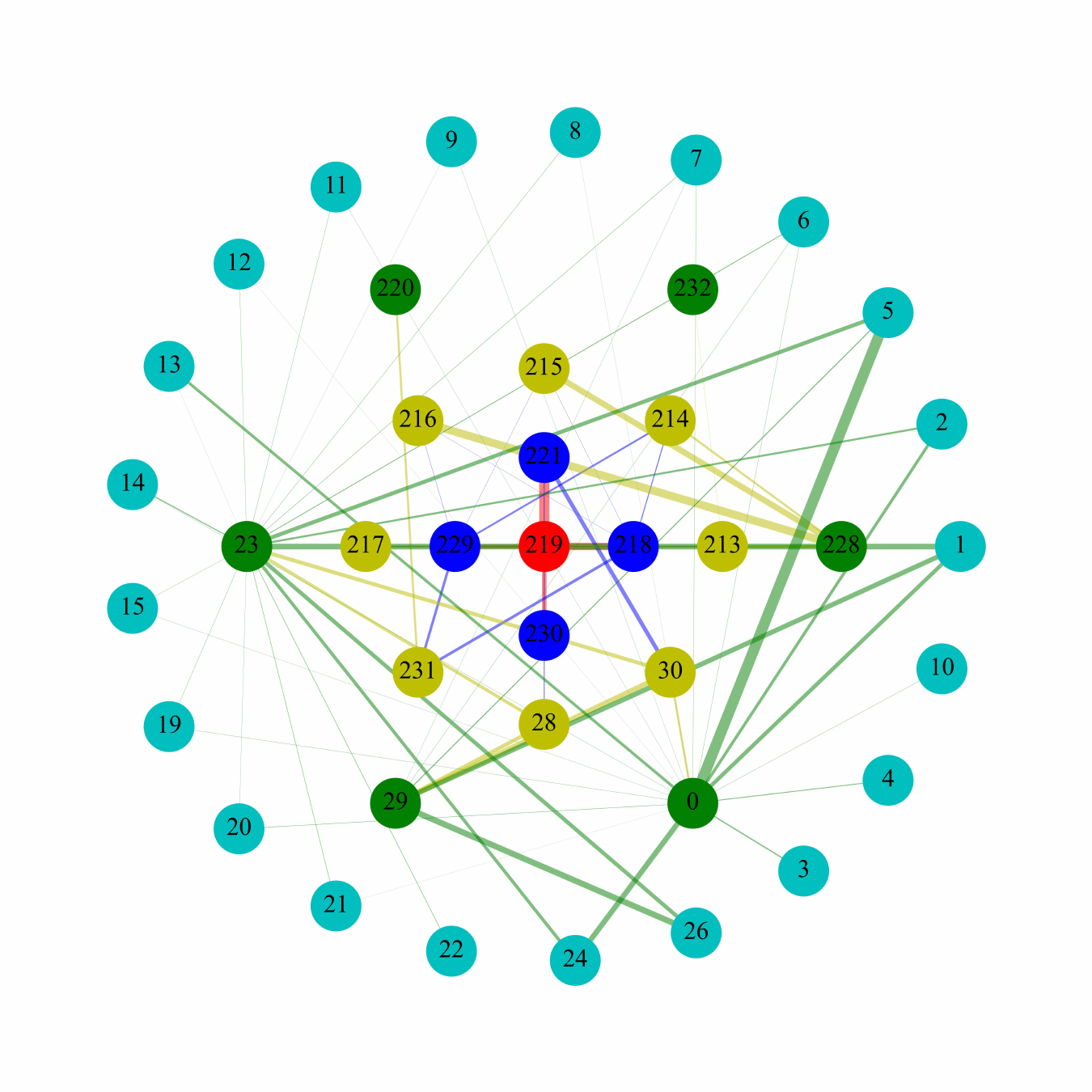} 
\label{baseline_curve_f} }

\centering 
\subfloat[At week 1.]{ 
\centering 	 
\includegraphics[width=0.33\columnwidth]{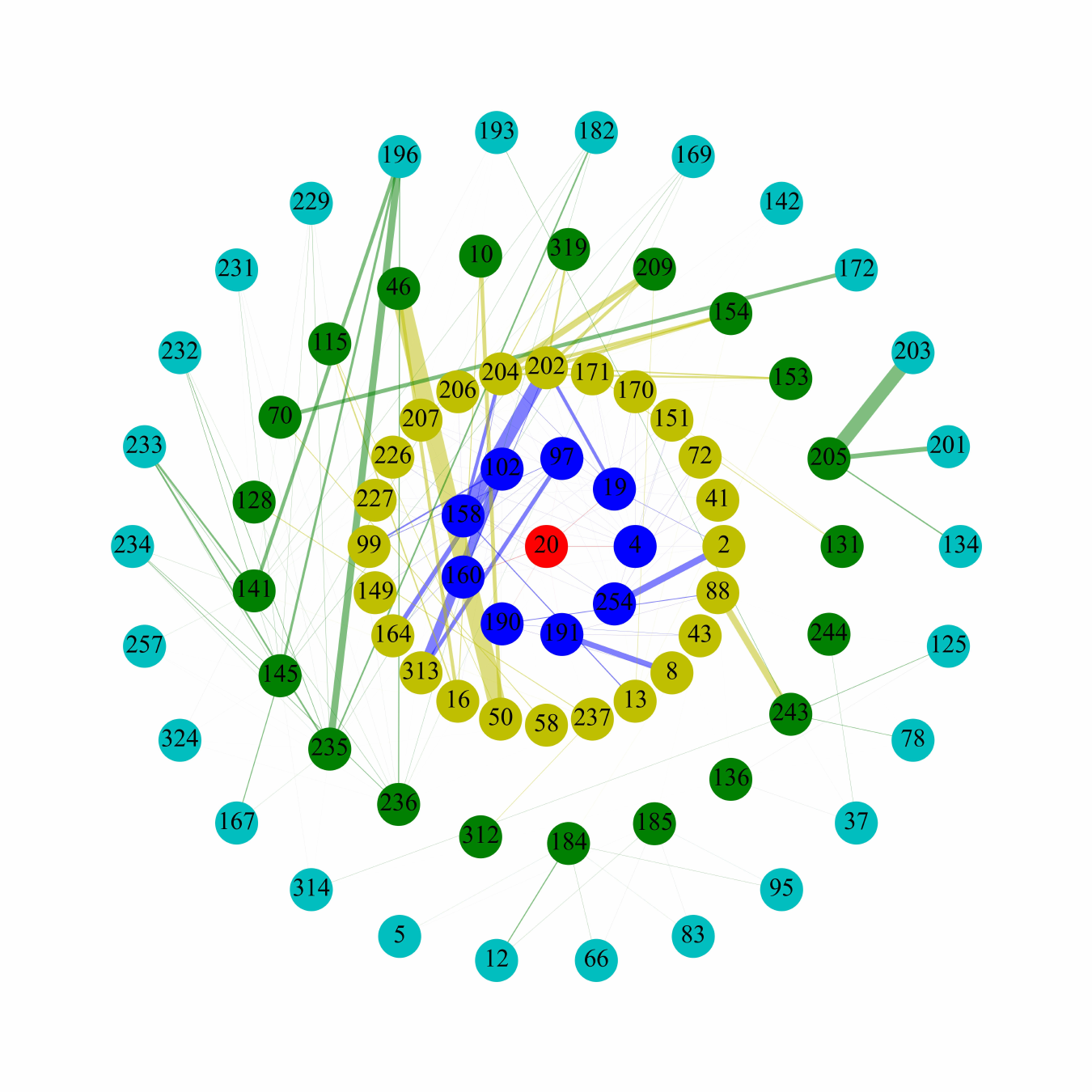} 	
\label{baseline_curve_d} }   
\subfloat[At week 2.]{ 
\centering  \includegraphics[width=0.33\columnwidth]{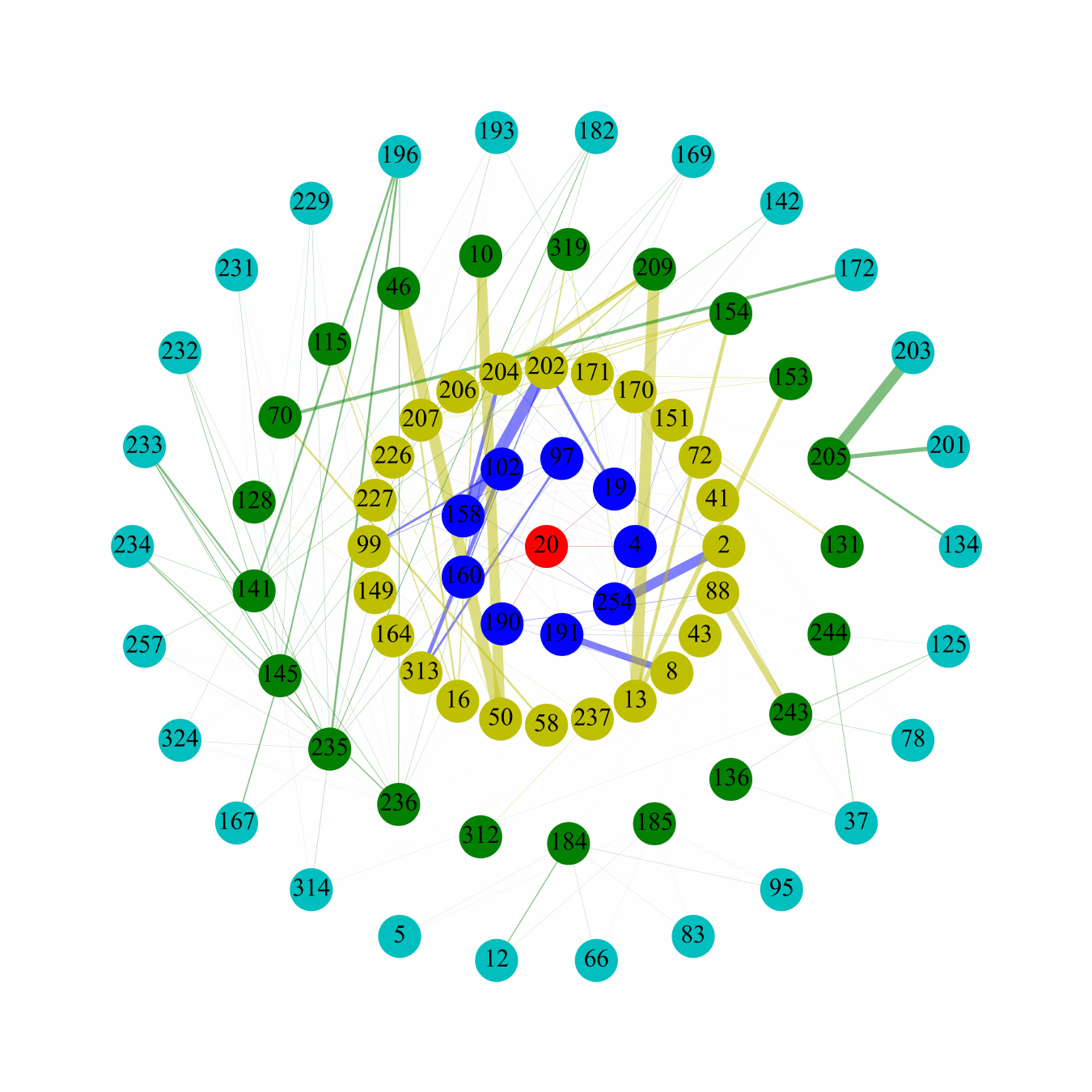} 
\label{baseline_curve_e} }  
\subfloat[At week 3.]{ 
\centering  
\includegraphics[width=0.33\columnwidth]{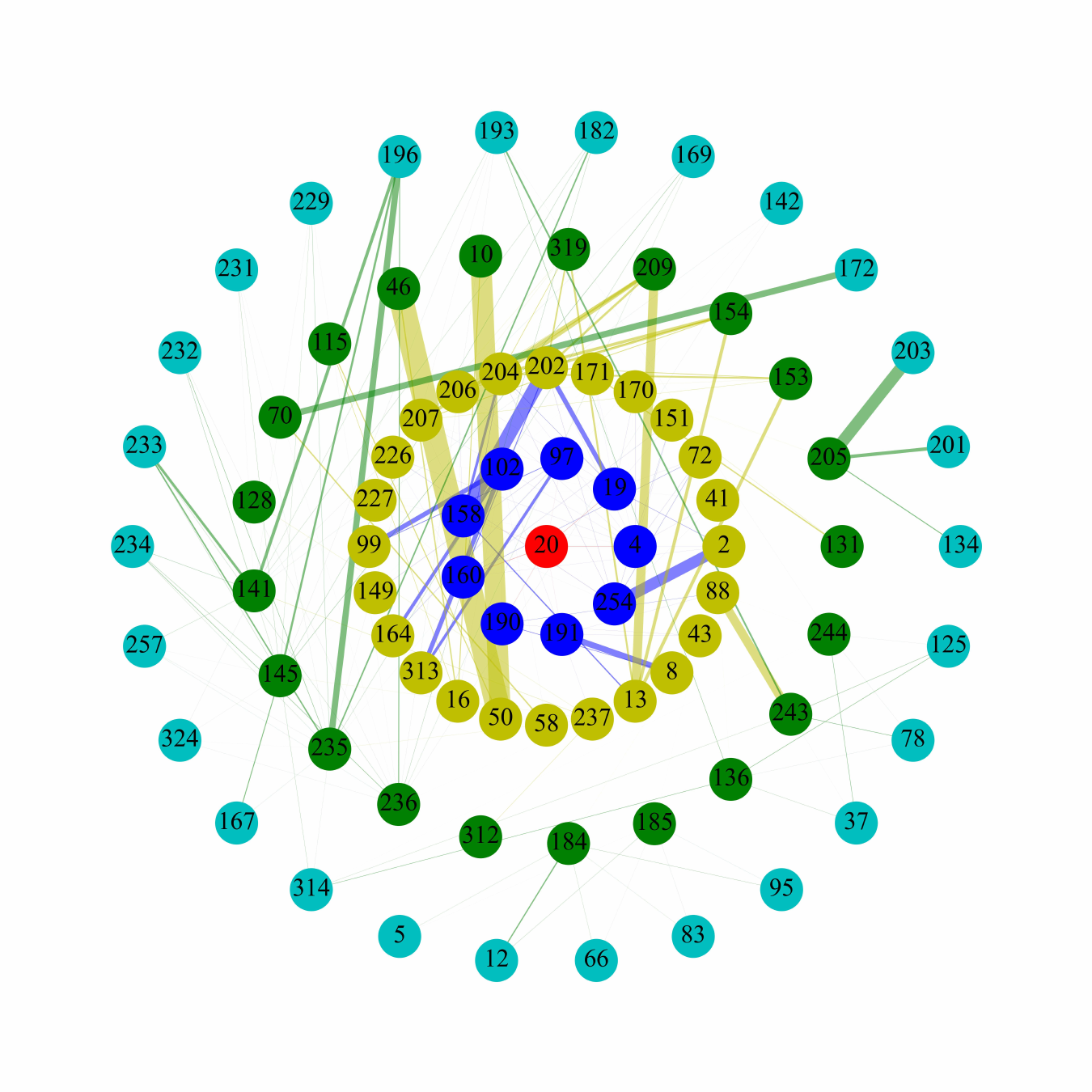}  
\label{baseline_curve_f} } 
\caption{The graph-like optical flow at different month and weeks on AQI and PeMS-BAY datasets.} 
\label{optical2} 
\end{figure}

As for the Electricity dataset, we choose 50 clients and draw the dynamic graph in Fig. \ref{dynamic_graph1}, \ref{dynamic_graph2} and \ref{dynamic_graph3} since there is no apparent original physical (e.g., geographic) proximity. From the diagram, we observe that the adjacent matrices at the adjacent time steps change little. It is reasonable to conclude that abnormal alterations in electricity consumption habits within a short time are infrequent among the majority of clients. However, while the long time interval appears to have a large change, from Fig. \ref{dynamic_graph4}, \ref{dynamic_graph5}, and \ref{dynamic_graph6}, different clients over different months show significant changes, and the adjacent matrices are changing dynamically and adaptively over time.  

\begin{figure}[!hbt]  
\centering 
\subfloat[At time step 1.]{
\centering 	 
\includegraphics[width=0.33\columnwidth]{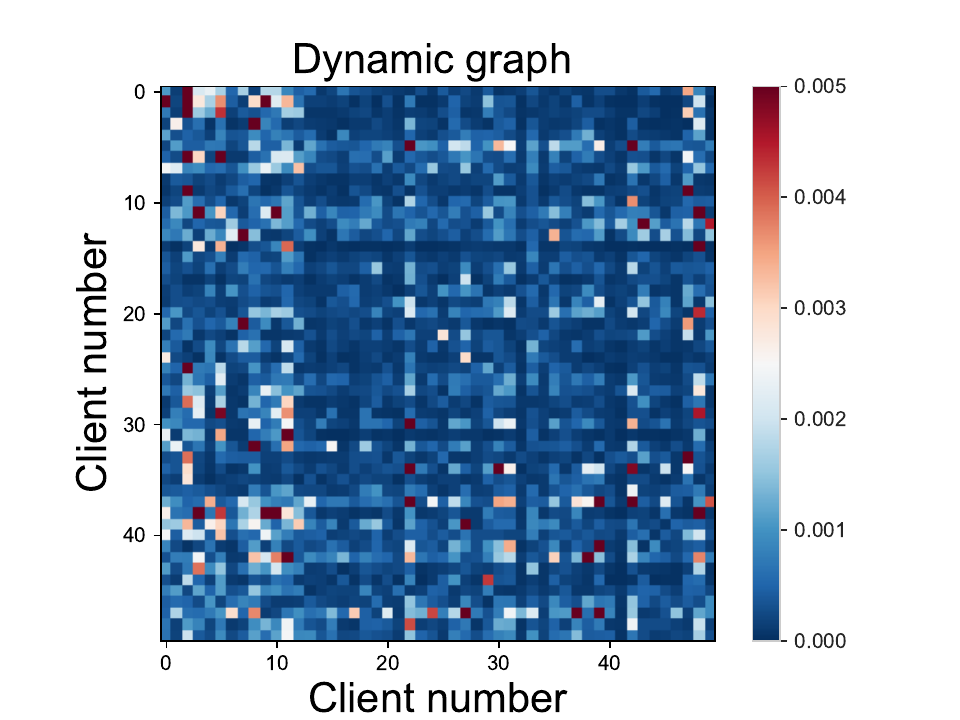} 	
\label{dynamic_graph1} }   
\subfloat[At time step 2.]{  
\centering  
\includegraphics[width=0.33\columnwidth]{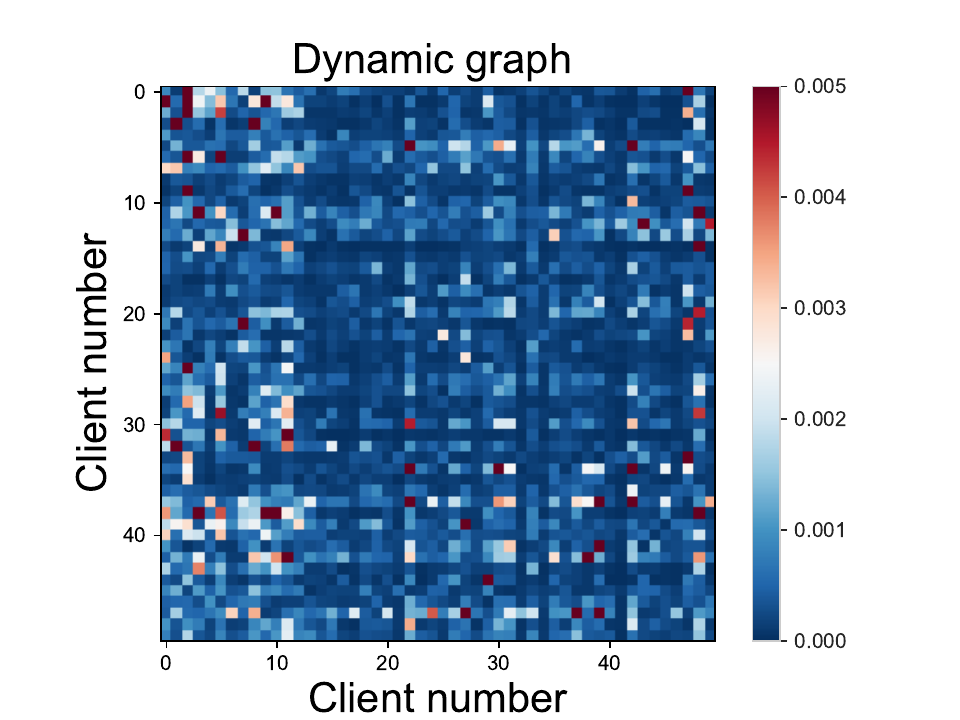}  
\label{dynamic_graph2} }   
\subfloat[At time step 3.]{  
\centering   
\includegraphics[width=0.33\columnwidth]{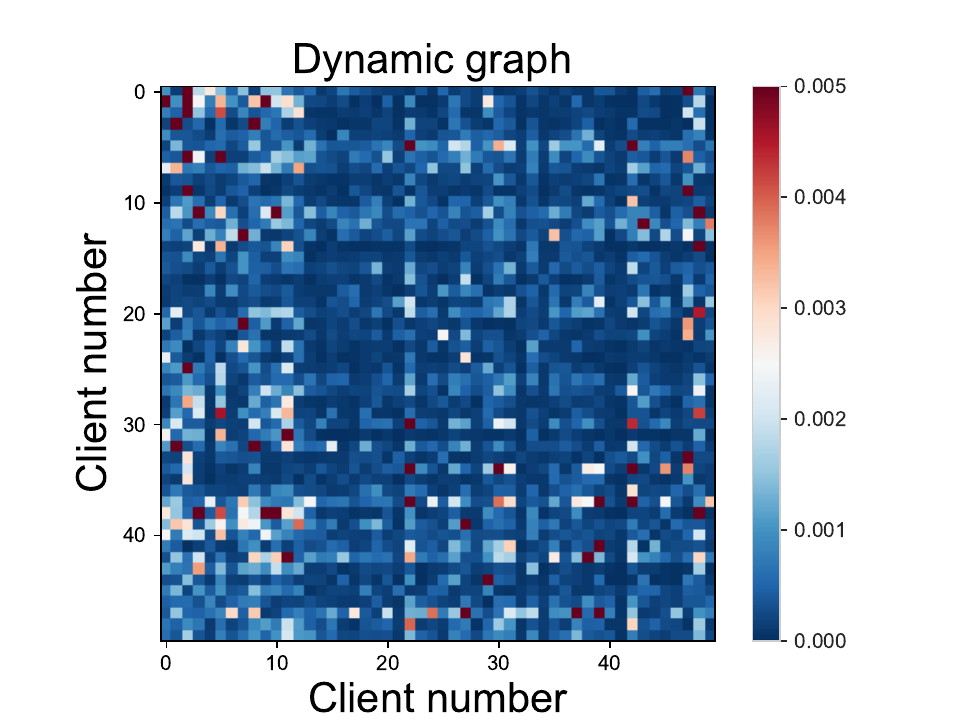}   
\label{dynamic_graph3} } 
\\

\centering  
\subfloat[At Jan.]{ 
\centering 	 
\includegraphics[width=0.33\columnwidth]{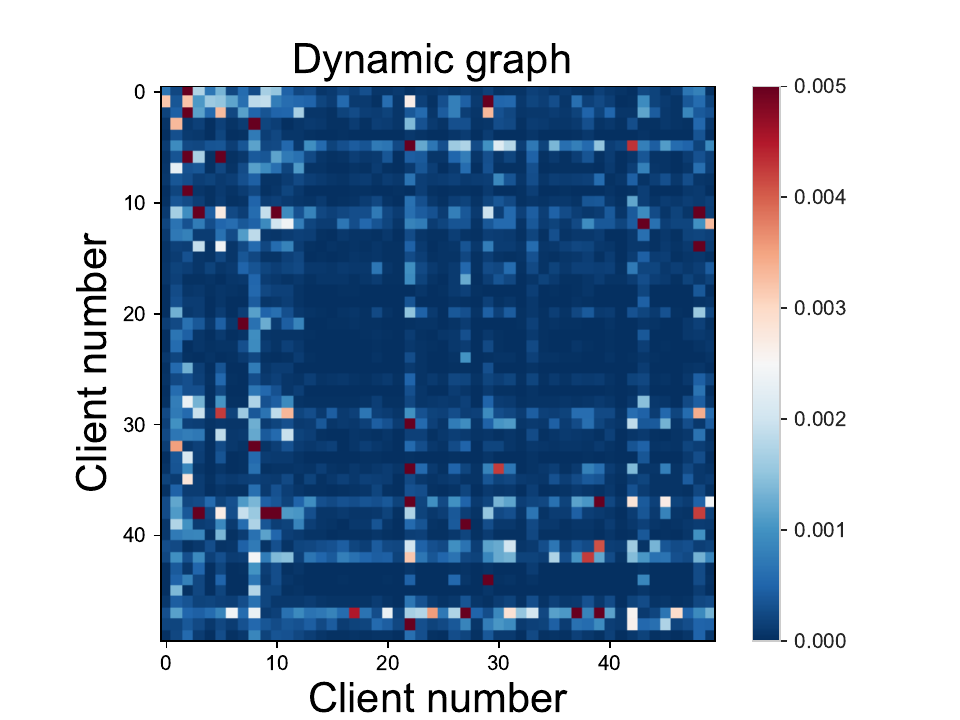} 	 
\label{dynamic_graph4} } 
\subfloat[At Feb.]{  
\centering   
\includegraphics[width=0.33\columnwidth]{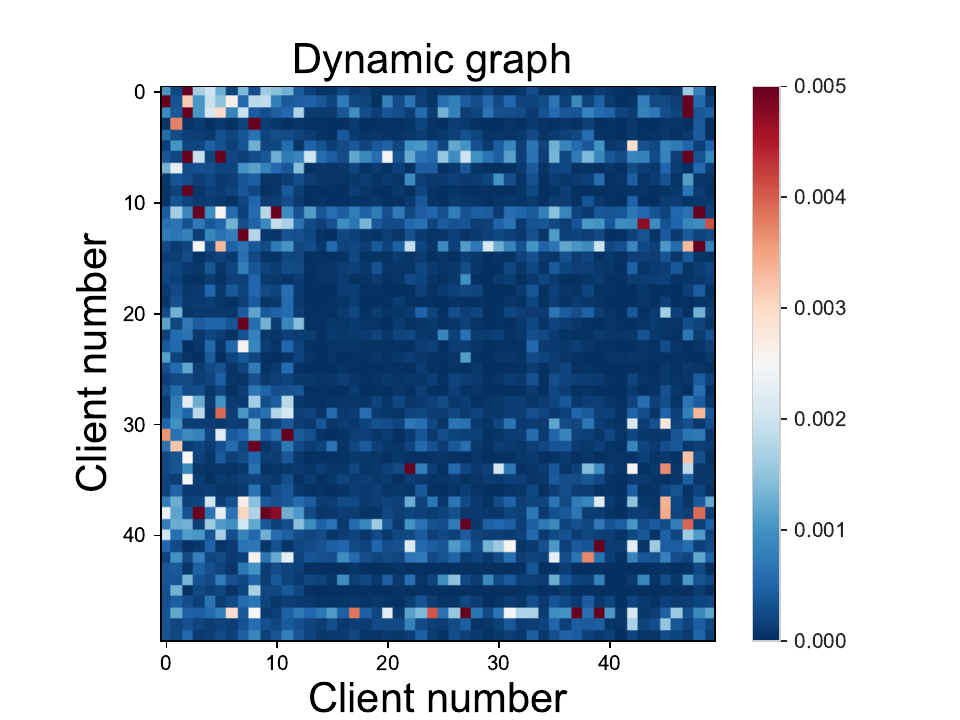}  
\label{dynamic_graph5} }   
\subfloat[At Mar.]{ 
\centering   
\includegraphics[width=0.33\columnwidth]{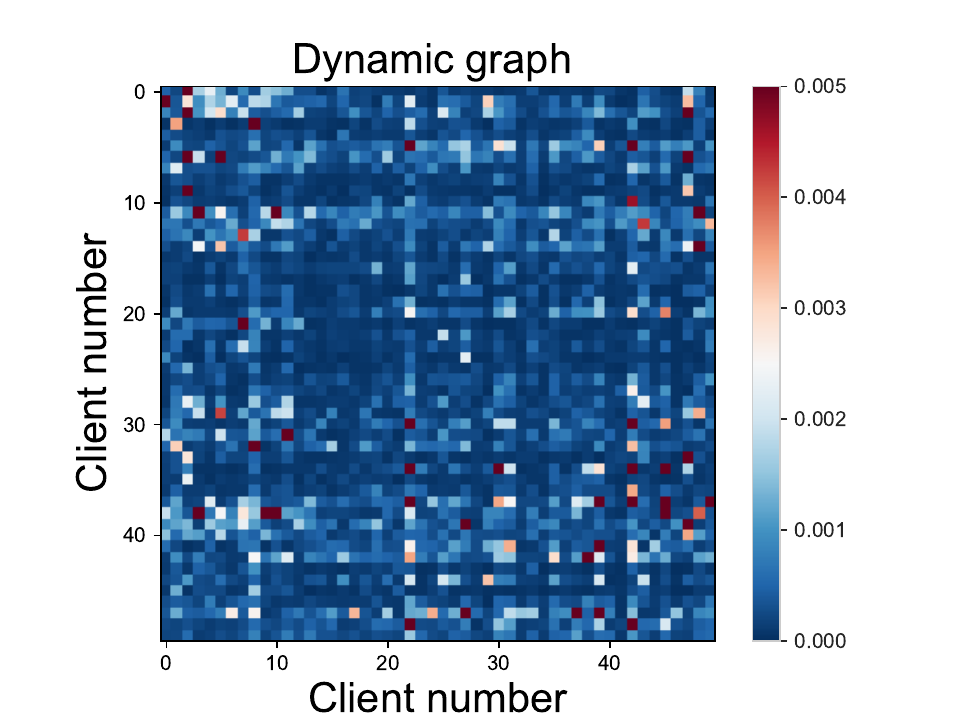}  
\label{dynamic_graph6} } 
\caption{Dynamic graphs at different time steps on Electricity dataset.} 
\label{dynamic_graph}  
\end{figure}

To capture the important node in the graph, we can discard the features node by node and evaluate the ground truth of the node's missing impact by Eq. \ref{missingEffect}. Then, we optimize Eq. \ref{FinalExpectation} to obtain the estimated  $p(\mathbf{P},\mathbf{Y}|\mathbf{Z})$ distribution. To evaluate the performance of the algorithm, we compare the ground truth to the estimated value on the PeMS-BAY and Electricity datasets in the block missing scenarios. The results are shown in Fig. \ref{Node_impact}. We arrange the horizontal coordination by the ground truth node missing impact sequence. The outcomes reveal that the estimated probability distribution $p(\mathbf{P},\mathbf{Y}|\mathbf{Z})$ can indicate the ground truth missing impact of nodes with the same increasing overall tendency, which offers valuable information to address missing datasets and holds significance for subsequent sensor maintenance efforts. 

\begin{figure}[!hbt]  
\subfloat[PeMS-BAY.]{ 
\includegraphics[width=\columnwidth]{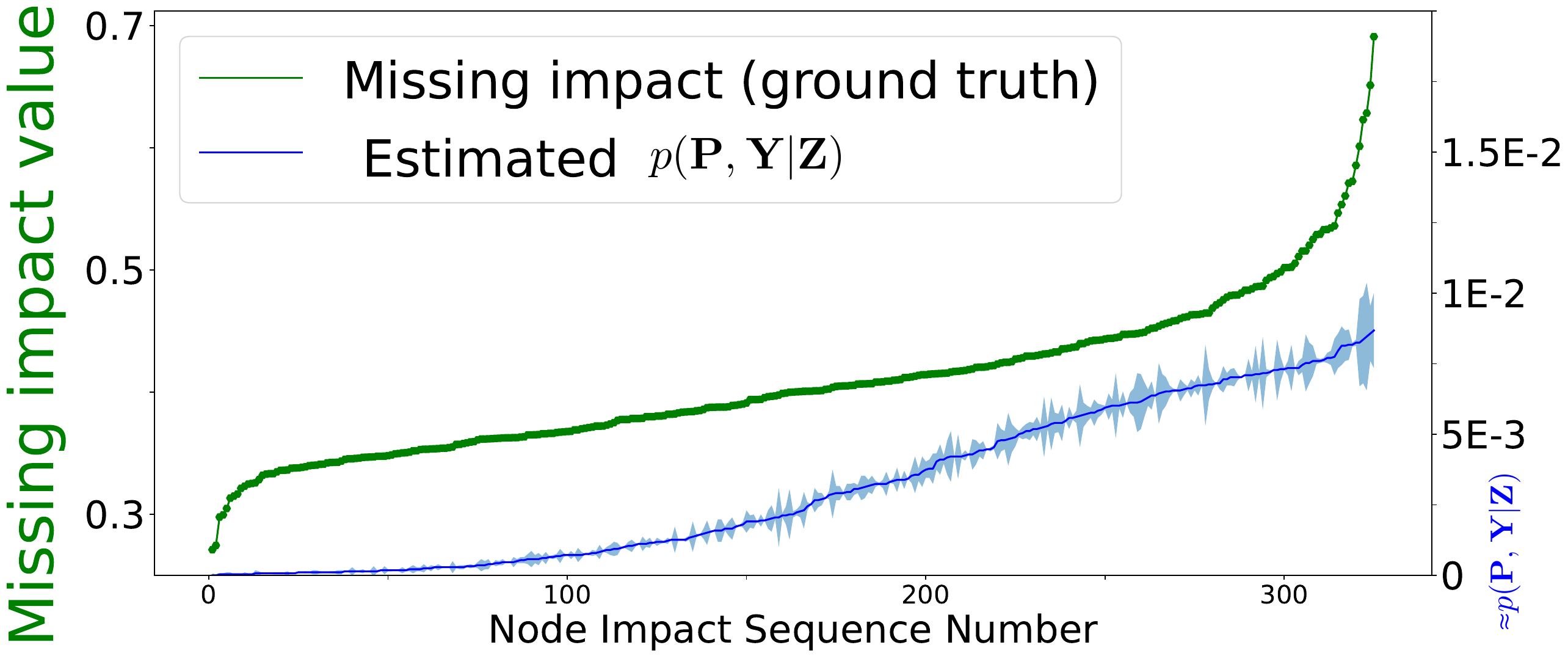} 
} 

\subfloat[Electricity.]{ 
\includegraphics[width=\columnwidth]{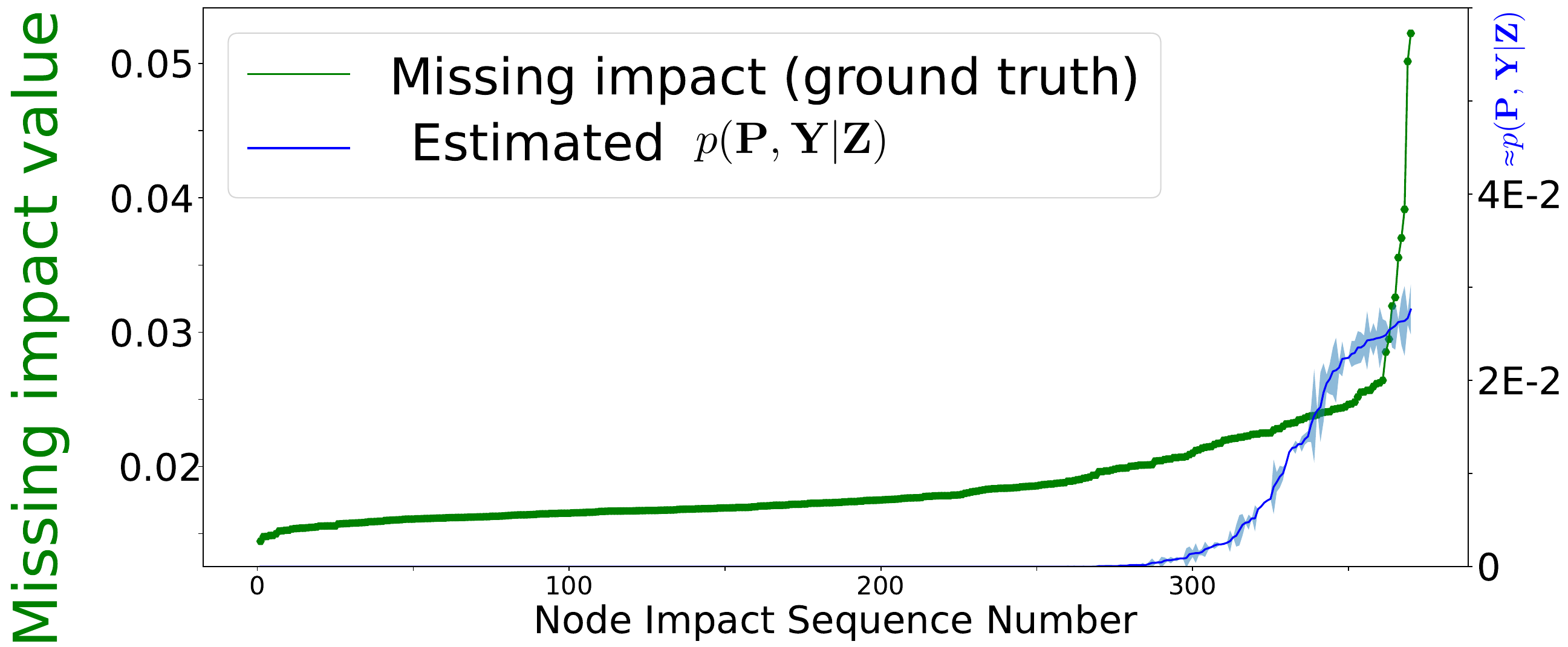}  
} 
\caption{The missing impact and  estimated $p(\mathbf{P},\mathbf{Y}|\mathbf{Z})$ distribution on PeMS-BAY and Electricity datasets .} 
\label{Node_impact}
\end{figure} 

To investigate node importance, we plot the sequences of the 50 most important nodes and corresponding node degrees for comparison. The results are presented in Fig. \ref{degree_graph}. As there is no ground truth of missing impact values in AQI dataset, we employ the estimated $p(\mathbf{P},\mathbf{Y}|\mathbf{Z})$ for comparison. Interestingly, the node with the highest degree does not hold the most significance within the entire graph neural network. Furthermore, nodes of equivalent degrees do not exert identical effects on the neural network's performance.   
\begin{figure}[!hbt]  
\centering  
\subfloat[AQI.]{ \centering 	 \includegraphics[width=0.9\columnwidth]{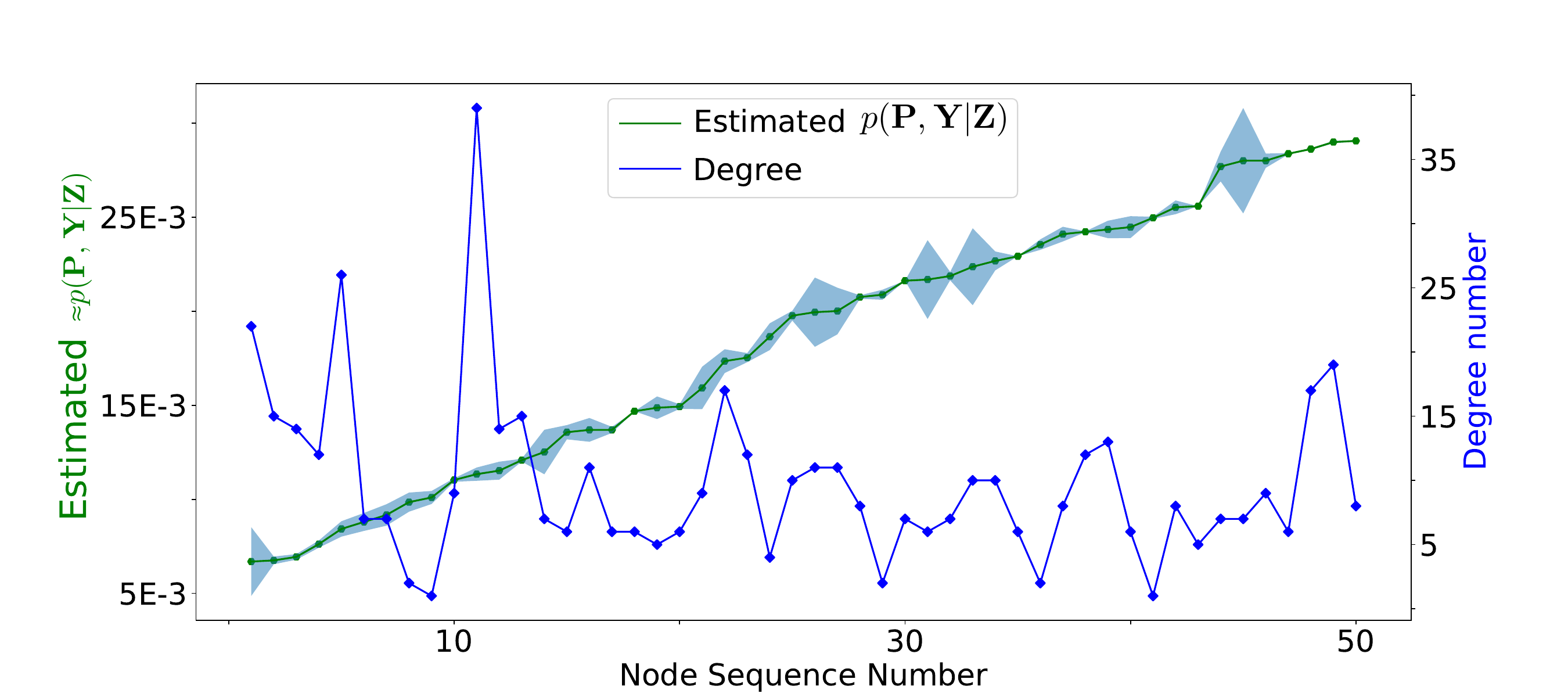} 	
\label{degree_graph1}   
}  

\subfloat[PeMS-BAY.]{  
\centering  \includegraphics[width=0.9\columnwidth]{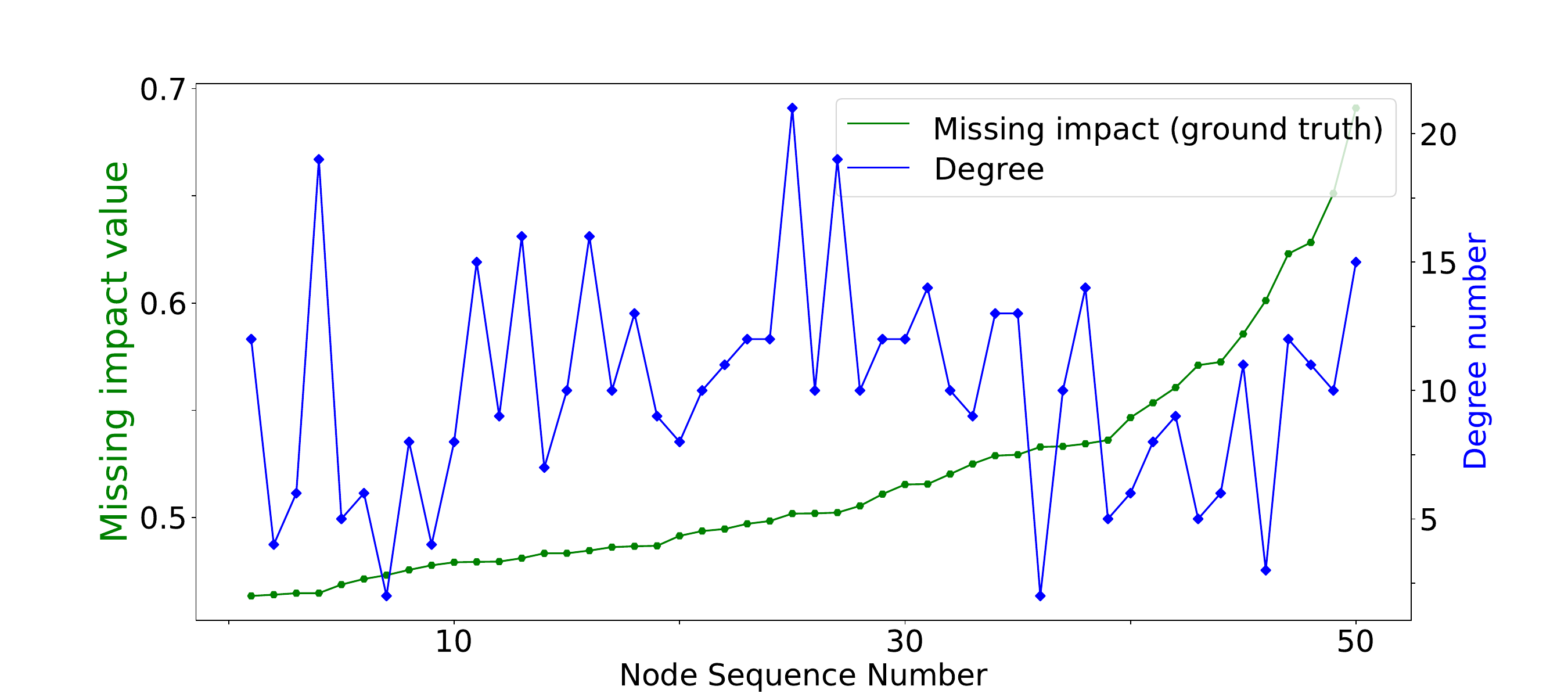}   \label{degree_graph2} 
} 
\caption{The node importance and degree of AQI and PeMS-BAY datasets.}  \label{degree_graph}  
\end{figure}

\section{Conclusion} 
In this paper, a novel HSPGNN is proposed for multivariate time series imputation. Different from earlier works, the model incorporated the physics model into the data-driven model, which makes the novel model more robust and explainable. Generic inhomogeneous PDE is adopted to constrain the spatio-temporal dependency on multi-order spatio-temporal correlation adaptively. Moreover, Chebyshev polynomials and 1D convolutional filters are utilized to reduce the space and time complexity by approximation. Also, we estimate the missing impact of NF to evaluate the importance of each node in the graph. Experiments demonstrate that our model can obtain better results than only data-driven models with better robustness in various time series datasets, especially in dealing with complex data missing situations. Also, dynamic graphs, the graph-like optical flow, and missing impact value can be derived naturally from this novel model with better explanations. In addition, the experiment shows that the combination of different-order neighbor nodes can obtain a better performance than only considering one specific hop since the real application scenario data are often governed by complex physics laws. In the future, we plan to explore how to further reduce the time complexity and methods for dealing with non-linear relationships in more complex scenarios.

\nocite{langley00}

\bibliography{HSP}

\begin{thebibliography}{59}
\providecommand{\natexlab}[1]{#1}
\providecommand{\url}[1]{\texttt{#1}}
\expandafter\ifx\csname urlstyle\endcsname\relax
  \providecommand{\doi}[1]{doi: #1}\else
  \providecommand{\doi}{doi: \begingroup \urlstyle{rm}\Url}\fi

\bibitem[Andrea et~al.(2022)Andrea, Ivan, Alippi, et~al.]{andrea2021filling}
Andrea, C., Ivan, M., Alippi, C., et~al.
\newblock Filling the g\_ap\_s: Multivariate time series imputation by graph neural networks.
\newblock In \emph{ICLR 2022}, pp.\  1--20. 2022.

\bibitem[Asif et~al.(2021)Asif, Sarker, Chakrabortty, Ryan, Ahamed, Saha, Badal, Das, Ali, Moyeen, et~al.]{asif2021graph}
Asif, N.~A., Sarker, Y., Chakrabortty, R.~K., Ryan, M.~J., Ahamed, M.~H., Saha, D.~K., Badal, F.~R., Das, S.~K., Ali, M.~F., Moyeen, S.~I., et~al.
\newblock Graph neural network: A comprehensive review on non-euclidean space.
\newblock \emph{IEEE Access}, 9:\penalty0 60588--60606, 2021.

\bibitem[Aslani \& Mahdavi-Nasab(2013)Aslani and Mahdavi-Nasab]{aslani2013optical}
Aslani, S. and Mahdavi-Nasab, H.
\newblock Optical flow based moving object detection and tracking for traffic surveillance.
\newblock \emph{International Journal of Electrical, Computer, Energetic, Electronic and Communication Engineering}, 7\penalty0 (9):\penalty0 1252--1256, 2013.

\bibitem[Baldassarre \& Azizpour(2019)Baldassarre and Azizpour]{baldassarre2019explainability}
Baldassarre, F. and Azizpour, H.
\newblock Explainability techniques for graph convolutional networks.
\newblock \emph{arXiv preprint arXiv:1905.13686}, 2019.

\bibitem[B{\"o}ttcher \& Grudsky(2000)B{\"o}ttcher and Grudsky]{bottcher2000toeplitz}
B{\"o}ttcher, A. and Grudsky, S.~M.
\newblock \emph{Toeplitz matrices, asymptotic linear algebra and functional analysis}, volume~67.
\newblock Springer, 2000.

\bibitem[Cai et~al.(2021)Cai, Wang, Wang, Perdikaris, and Karniadakis]{cai2021physics}
Cai, S., Wang, Z., Wang, S., Perdikaris, P., and Karniadakis, G.~E.
\newblock Physics-informed neural networks for heat transfer problems.
\newblock \emph{Journal of Heat Transfer}, 143\penalty0 (6):\penalty0 060801, 2021.

\bibitem[Candes \& Recht(2012)Candes and Recht]{candes2012exact}
Candes, E. and Recht, B.
\newblock Exact matrix completion via convex optimization.
\newblock \emph{Communications of the ACM}, 55\penalty0 (6):\penalty0 111--119, 2012.

\bibitem[Cao et~al.(2018)Cao, Wang, Li, Zhou, Li, and Li]{cao2018brits}
Cao, W., Wang, D., Li, J., Zhou, H., Li, L., and Li, Y.
\newblock Brits: Bidirectional recurrent imputation for time series.
\newblock \emph{Advances in neural information processing systems}, 31, 2018.

\bibitem[Challu et~al.(2023)Challu, Olivares, Oreshkin, Ramirez, Canseco, and Dubrawski]{challu2023nhits}
Challu, C., Olivares, K.~G., Oreshkin, B.~N., Ramirez, F.~G., Canseco, M.~M., and Dubrawski, A.
\newblock Nhits: Neural hierarchical interpolation for time series forecasting.
\newblock In \emph{Proceedings of the AAAI Conference on Artificial Intelligence}, volume~37, pp.\  6989--6997, 2023.

\bibitem[Chen et~al.(2019)Chen, He, Chen, Lu, and Wang]{chen2019missing}
Chen, X., He, Z., Chen, Y., Lu, Y., and Wang, J.
\newblock Missing traffic data imputation and pattern discovery with a bayesian augmented tensor factorization model.
\newblock \emph{Transportation Research Part C: Emerging Technologies}, 104:\penalty0 66--77, 2019.

\bibitem[Cichocki \& Phan(2009)Cichocki and Phan]{cichocki2009fast}
Cichocki, A. and Phan, A.-H.
\newblock Fast local algorithms for large scale nonnegative matrix and tensor factorizations.
\newblock \emph{IEICE transactions on fundamentals of electronics, communications and computer sciences}, 92\penalty0 (3):\penalty0 708--721, 2009.

\bibitem[Cover \& Hart(1967)Cover and Hart]{cover1967nearest}
Cover, T. and Hart, P.
\newblock Nearest neighbor pattern classification.
\newblock \emph{IEEE transactions on information theory}, 13\penalty0 (1):\penalty0 21--27, 1967.

\bibitem[Cuomo et~al.(2022)Cuomo, Di~Cola, Giampaolo, Rozza, Raissi, and Piccialli]{cuomo2022scientific}
Cuomo, S., Di~Cola, V.~S., Giampaolo, F., Rozza, G., Raissi, M., and Piccialli, F.
\newblock Scientific machine learning through physics--informed neural networks: Where we are and what’s next.
\newblock \emph{Journal of Scientific Computing}, 92\penalty0 (3):\penalty0 88, 2022.

\bibitem[Du et~al.(2023)Du, C{\^o}t{\'e}, and Liu]{du2023saits}
Du, W., C{\^o}t{\'e}, D., and Liu, Y.
\newblock Saits: Self-attention-based imputation for time series.
\newblock \emph{Expert Systems with Applications}, 219:\penalty0 119619, 2023.

\bibitem[Elman(1990)]{elman1990finding}
Elman, J.~L.
\newblock Finding structure in time.
\newblock \emph{Cognitive science}, 14\penalty0 (2):\penalty0 179--211, 1990.

\bibitem[Fan(2022)]{fan2021multi}
Fan, J.
\newblock Multi-mode deep matrix and tensor factorization.
\newblock In \emph{international conference on learning representations}, 2022.

\bibitem[Fan et~al.(2022)Fan, Sarmadi, and Nowaczyk]{fan2022incorporating}
Fan, Y., Sarmadi, H., and Nowaczyk, S.
\newblock Incorporating physics-based models into data driven approaches for air leak detection in city buses.
\newblock In \emph{Joint European Conference on Machine Learning and Knowledge Discovery in Databases}, pp.\  438--450. Springer, 2022.

\bibitem[Fang \& Wang(2020)Fang and Wang]{fang2020time}
Fang, C. and Wang, C.
\newblock Time series data imputation: A survey on deep learning approaches.
\newblock \emph{arXiv preprint arXiv:2011.11347}, 2020.

\bibitem[Farlow(1993)]{farlow1993partial}
Farlow, S.~J.
\newblock \emph{Partial differential equations for scientists and engineers}.
\newblock Courier Corporation, 1993.

\bibitem[Feng et~al.(2017)Feng, Guo, Qin, Liu, and Liu]{feng2017effective}
Feng, X., Guo, J., Qin, B., Liu, T., and Liu, Y.
\newblock Effective deep memory networks for distant supervised relation extraction.
\newblock In \emph{IJCAI}, volume~17, pp.\  1--7, 2017.

\bibitem[Guo et~al.(2020)Guo, Hu, Qian, Sun, Gao, and Yin]{guo2020dynamic}
Guo, K., Hu, Y., Qian, Z., Sun, Y., Gao, J., and Yin, B.
\newblock Dynamic graph convolution network for traffic forecasting based on latent network of laplace matrix estimation.
\newblock \emph{IEEE Transactions on Intelligent Transportation Systems}, 23\penalty0 (2):\penalty0 1009--1018, 2020.

\bibitem[Guo et~al.(2019)Guo, Lin, Feng, Song, and Wan]{guo2019attention}
Guo, S., Lin, Y., Feng, N., Song, C., and Wan, H.
\newblock Attention based spatial-temporal graph convolutional networks for traffic flow forecasting.
\newblock In \emph{Proceedings of the AAAI conference on artificial intelligence}, volume~33, pp.\  922--929, 2019.

\bibitem[Hochreiter \& Schmidhuber(1997)Hochreiter and Schmidhuber]{LSTM}
Hochreiter, S. and Schmidhuber, J.
\newblock Long short-term memory.
\newblock \emph{Neural computation}, 9:\penalty0 1735--80, 12 1997.
\newblock \doi{10.1162/neco.1997.9.8.1735}.

\bibitem[Huang et~al.(2022)Huang, Yamada, Tian, Singh, and Chang]{huang2022graphlime}
Huang, Q., Yamada, M., Tian, Y., Singh, D., and Chang, Y.
\newblock Graphlime: Local interpretable model explanations for graph neural networks.
\newblock \emph{IEEE Transactions on Knowledge and Data Engineering}, 2022.

\bibitem[Jin et~al.(2023)Jin, Koh, Wen, Zambon, Alippi, Webb, King, and Pan]{jin2023survey}
Jin, M., Koh, H.~Y., Wen, Q., Zambon, D., Alippi, C., Webb, G.~I., King, I., and Pan, S.
\newblock A survey on graph neural networks for time series: Forecasting, classification, imputation, and anomaly detection.
\newblock \emph{arXiv preprint arXiv:2307.03759}, 2023.

\bibitem[Johnson(1990)]{johnson1990matrix}
Johnson, C.~R.
\newblock Matrix completion problems: a survey.
\newblock In \emph{Matrix theory and applications}, volume~40, pp.\  171--198, 1990.

\bibitem[Kipf \& Welling(2016)Kipf and Welling]{kipf2016semi}
Kipf, T.~N. and Welling, M.
\newblock Semi-supervised classification with graph convolutional networks.
\newblock \emph{arXiv preprint arXiv:1609.02907}, 2016.

\bibitem[Klinger(1967)]{klinger1967vandermonde}
Klinger, A.
\newblock The vandermonde matrix.
\newblock \emph{The American Mathematical Monthly}, 74\penalty0 (5):\penalty0 571--574, 1967.

\bibitem[Krishnapriyan et~al.(2021)Krishnapriyan, Gholami, Zhe, Kirby, and Mahoney]{krishnapriyan2021characterizing}
Krishnapriyan, A., Gholami, A., Zhe, S., Kirby, R., and Mahoney, M.~W.
\newblock Characterizing possible failure modes in physics-informed neural networks.
\newblock \emph{Advances in Neural Information Processing Systems}, 34:\penalty0 26548--26560, 2021.

\bibitem[Li et~al.(2017)Li, Yu, Shahabi, and Liu]{li2017diffusion}
Li, Y., Yu, R., Shahabi, C., and Liu, Y.
\newblock Diffusion convolutional recurrent neural network: Data-driven traffic forecasting.
\newblock \emph{arXiv preprint arXiv:1707.01926}, 2017.

\bibitem[Liang et~al.(2023)Liang, Kintak, Ning, Tiwari, Nowaczyk, and Kumar]{liang2023semantics}
Liang, G., Kintak, U., Ning, X., Tiwari, P., Nowaczyk, S., and Kumar, N.
\newblock Semantics-aware dynamic graph convolutional network for traffic flow forecasting.
\newblock \emph{IEEE Transactions on Vehicular Technology}, 2023.

\bibitem[Lighthill \& Whitham(1955)Lighthill and Whitham]{lighthill1955kinematic}
Lighthill, M.~J. and Whitham, G.~B.
\newblock On kinematic waves ii. a theory of traffic flow on long crowded roads.
\newblock \emph{Proceedings of the royal society of london. series a. mathematical and physical sciences}, 229\penalty0 (1178):\penalty0 317--345, 1955.

\bibitem[Nguyen et~al.(2019)Nguyen, Kim, and Shim]{nguyen2019low}
Nguyen, L.~T., Kim, J., and Shim, B.
\newblock Low-rank matrix completion: A contemporary survey.
\newblock \emph{IEEE Access}, 7:\penalty0 94215--94237, 2019.

\bibitem[Oehmcke et~al.(2016)Oehmcke, Zielinski, and Kramer]{oehmcke2016knn}
Oehmcke, S., Zielinski, O., and Kramer, O.
\newblock knn ensembles with penalized dtw for multivariate time series imputation.
\newblock In \emph{2016 International Joint Conference on Neural Networks (IJCNN)}, pp.\  2774--2781. IEEE, 2016.

\bibitem[O\"zi\c{s}ik \& Tzou(1994)O\"zi\c{s}ik and Tzou]{o1994wave}
O\"zi\c{s}ik, M. and Tzou, D.
\newblock On the wave theory in heat conduction.
\newblock 1994.

\bibitem[Pope et~al.(2019)Pope, Kolouri, Rostami, Martin, and Hoffmann]{pope2019explainability}
Pope, P.~E., Kolouri, S., Rostami, M., Martin, C.~E., and Hoffmann, H.
\newblock Explainability methods for graph convolutional neural networks.
\newblock In \emph{Proceedings of the IEEE/CVF conference on computer vision and pattern recognition}, pp.\  10772--10781, 2019.

\bibitem[Raissi et~al.(2019)Raissi, Perdikaris, and Karniadakis]{raissi2019physics}
Raissi, M., Perdikaris, P., and Karniadakis, G.~E.
\newblock Physics-informed neural networks: A deep learning framework for solving forward and inverse problems involving nonlinear partial differential equations.
\newblock \emph{Journal of Computational physics}, 378:\penalty0 686--707, 2019.

\bibitem[Rezende \& Mohamed(2015)Rezende and Mohamed]{rezende2015variational}
Rezende, D. and Mohamed, S.
\newblock Variational inference with normalizing flows.
\newblock In \emph{International conference on machine learning}, pp.\  1530--1538. PMLR, 2015.

\bibitem[Saha et~al.(2021)Saha, Dash, and Mukhopadhyay]{saha2021physics}
Saha, P., Dash, S., and Mukhopadhyay, S.
\newblock Physics-incorporated convolutional recurrent neural networks for source identification and forecasting of dynamical systems.
\newblock \emph{Neural Networks}, 144:\penalty0 359--371, 2021.

\bibitem[Salsa \& Verzini(2022)Salsa and Verzini]{salsa2022partial}
Salsa, S. and Verzini, G.
\newblock \emph{Partial differential equations in action: from modelling to theory}, volume 147.
\newblock Springer Nature, 2022.

\bibitem[Schlichtkrull et~al.(2020)Schlichtkrull, De~Cao, and Titov]{schlichtkrull2020interpreting}
Schlichtkrull, M.~S., De~Cao, N., and Titov, I.
\newblock Interpreting graph neural networks for nlp with differentiable edge masking.
\newblock In \emph{International Conference on Learning Representations}, 2020.

\bibitem[Schnake et~al.(2021)Schnake, Eberle, Lederer, Nakajima, Sch{\"u}tt, M{\"u}ller, and Montavon]{schnake2021higher}
Schnake, T., Eberle, O., Lederer, J., Nakajima, S., Sch{\"u}tt, K.~T., M{\"u}ller, K.-R., and Montavon, G.
\newblock Higher-order explanations of graph neural networks via relevant walks.
\newblock \emph{IEEE transactions on pattern analysis and machine intelligence}, 44\penalty0 (11):\penalty0 7581--7596, 2021.

\bibitem[Shi et~al.(2021)Shi, Mo, Huang, Di, and Du]{shi2021physics}
Shi, R., Mo, Z., Huang, K., Di, X., and Du, Q.
\newblock A physics-informed deep learning paradigm for traffic state and fundamental diagram estimation.
\newblock \emph{IEEE Transactions on Intelligent Transportation Systems}, 23\penalty0 (8):\penalty0 11688--11698, 2021.

\bibitem[Sridevi et~al.(2011)Sridevi, Rajaram, Parthiban, SibiArasan, and Swadhikar]{sridevi2011imputation}
Sridevi, S., Rajaram, S., Parthiban, C., SibiArasan, S., and Swadhikar, C.
\newblock Imputation for the analysis of missing values and prediction of time series data.
\newblock In \emph{2011 international conference on recent trends in information Technology (ICRTIT)}, pp.\  1158--1163. IEEE, 2011.

\bibitem[Strang(2022)]{strang2022introduction}
Strang, G.
\newblock \emph{Introduction to linear algebra}.
\newblock SIAM, 2022.

\bibitem[Tealab(2018)]{tealab2018time}
Tealab, A.
\newblock Time series forecasting using artificial neural networks methodologies: A systematic review.
\newblock \emph{Future Computing and Informatics Journal}, 3\penalty0 (2):\penalty0 334--340, 2018.

\bibitem[Tie et~al.(2016)Tie, Tian, and Aubry]{tie2016theoretical}
Tie, B., Tian, B., and Aubry, D.
\newblock Theoretical and numerical modeling of membrane and bending elastic wave propagation in honeycomb thin layers and sandwiches.
\newblock \emph{Journal of Sound and Vibration}, 382:\penalty0 100--121, 2016.

\bibitem[Trindade(2015)]{misc_electricityloaddiagrams20112014_321}
Trindade, A.
\newblock {ElectricityLoadDiagrams20112014}.
\newblock UCI Machine Learning Repository, 2015.
\newblock {DOI}: https://doi.org/10.24432/C58C86.

\bibitem[Vaswani et~al.(2017)Vaswani, Shazeer, Parmar, Uszkoreit, Jones, Gomez, Kaiser, Polosukhin, et~al.]{vaswani2017advances}
Vaswani, A., Shazeer, N., Parmar, N., Uszkoreit, J., Jones, L., Gomez, A.~N., Kaiser, {\L}., Polosukhin, I., et~al.
\newblock Advances in neural information processing systems.
\newblock \emph{Attention is All you Need}, 2017.

\bibitem[White et~al.(2011)White, Royston, and Wood]{white2011multiple}
White, I.~R., Royston, P., and Wood, A.~M.
\newblock Multiple imputation using chained equations: issues and guidance for practice.
\newblock \emph{Statistics in medicine}, 30\penalty0 (4):\penalty0 377--399, 2011.

\bibitem[Wu et~al.(2021)Wu, Zhuang, Labbe, and Sun]{wu2021inductive}
Wu, Y., Zhuang, D., Labbe, A., and Sun, L.
\newblock Inductive graph neural networks for spatiotemporal kriging.
\newblock In \emph{Proceedings of the AAAI Conference on Artificial Intelligence}, volume~35, pp.\  4478--4485, 2021.

\bibitem[Xu et~al.(2023)Xu, Lin, Luo, and Xu]{xu2023hrst}
Xu, X., Lin, M., Luo, X., and Xu, Z.
\newblock Hrst-lr: A hessian regularization spatio-temporal low rank algorithm for traffic data imputation.
\newblock \emph{IEEE Transactions on Intelligent Transportation Systems}, 2023.

\bibitem[Yang et~al.(2022)Yang, Chen, Dai, Ding, Wu, and Zheng]{yang2022robust}
Yang, J.-H., Chen, C., Dai, H.-N., Ding, M., Wu, Z.-B., and Zheng, Z.
\newblock Robust corrupted data recovery and clustering via generalized transformed tensor low-rank representation.
\newblock \emph{IEEE Transactions on Neural Networks and Learning Systems}, 2022.

\bibitem[Yi et~al.(2016)Yi, Zheng, Zhang, and Li]{yi2016st}
Yi, X., Zheng, Y., Zhang, J., and Li, T.
\newblock St-mvl: filling missing values in geo-sensory time series data.
\newblock In \emph{Proceedings of the 25th International Joint Conference on Artificial Intelligence}, 2016.

\bibitem[Ying et~al.(2019)Ying, Bourgeois, You, Zitnik, and Leskovec]{ying2019gnnexplainer}
Ying, Z., Bourgeois, D., You, J., Zitnik, M., and Leskovec, J.
\newblock Gnnexplainer: Generating explanations for graph neural networks.
\newblock \emph{Advances in neural information processing systems}, 32, 2019.

\bibitem[Yuan et~al.(2020)Yuan, Tang, Hu, and Ji]{yuan2020xgnn}
Yuan, H., Tang, J., Hu, X., and Ji, S.
\newblock Xgnn: Towards model-level explanations of graph neural networks.
\newblock In \emph{Proceedings of the 26th ACM SIGKDD International Conference on Knowledge Discovery \& Data Mining}, pp.\  430--438, 2020.

\bibitem[Yuan et~al.(2022)Yuan, Yu, Gui, and Ji]{yuan2022explainability}
Yuan, H., Yu, H., Gui, S., and Ji, S.
\newblock Explainability in graph neural networks: A taxonomic survey.
\newblock \emph{IEEE transactions on pattern analysis and machine intelligence}, 45\penalty0 (5):\penalty0 5782--5799, 2022.

\bibitem[Zhao et~al.(2023)Zhao, Sun, Dezfouli, and Bonilla]{zhao2023transformed}
Zhao, H., Sun, K., Dezfouli, A., and Bonilla, E.~V.
\newblock Transformed distribution matching for missing value imputation.
\newblock In \emph{International Conference on Machine Learning}, pp.\  42159--42186. PMLR, 2023.

\bibitem[Ziat et~al.(2017)Ziat, Delasalles, Denoyer, and Gallinari]{ziat2017spatio}
Ziat, A., Delasalles, E., Denoyer, L., and Gallinari, P.
\newblock Spatio-temporal neural networks for space-time series forecasting and relations discovery.
\newblock In \emph{2017 IEEE International Conference on Data Mining (ICDM)}, pp.\  705--714. IEEE, 2017.

\end{thebibliography}
\bibliographystyle{icml2024}

\newpage
\appendix
\onecolumn
\section{Graph convolutional network} \label{GCN}
Graph convolutional network (GCN) builds a graph-based neural network model to graph representation. Laplacian matrix can be formulated as $\mathbf{L}=\mathbf{D}^{-\frac{1}{2}} (\mathbf{I}-\mathbf{A})\mathbf{D}^{\frac{1}{2}}$, where $\mathbf{I}$ is the identity matrix, which means adding self-connections, $\mathbf{D}$ is the degree matrix. For GCN, considering $K$-hop aggregation, GCN model can be expressed by the following formula \cite{kipf2016semi}: 
\begin{equation}
\label{eq4} \mathbf{H}^{(l+1)}=f\left(\mathbf{A}, \mathbf{H}^{(l)}\right)=\sigma\left(\sum\limits_{k=1}\limits^{K}  \mathbf{L}^{k} \mathbf{H}^{(l)} \mathbf{W}_k^{(l)}\right),
\end{equation}
where $\mathbf{H}^{(l)} \in \mathbb{R}^{N \times F}$ is the input of $l$ layer, while $\mathbf{H}^{(l+1)} \in \mathbb{R}^{N \times E}$ is output of $l$ layer with $E$ embedding, $k$ denotes the number of hops aggregation and $k \in \{1, ... ,K\}$. In addition, $\mathbf{W}_k \in \mathbb{R}^{F \times E}$ is a learnable parameter, and $\sigma$ represents the nonlinear activation function. In this study, we ignore the activation function and consider the feature matrix, we deploy the transpose operation to Eq. \ref{eq4}. Thus, the transpose GCN can be obtained as follows:
\begin{equation} 
\begin{aligned}
\label{gcn} 
\mathbf{X}^{(l+1)}=f\left(\mathbf{A}, \mathbf{X}^{(l)}\right)=\sum\limits_{k=1}\limits^{K} \mathbf{\Theta}_k^{(l)} \mathbf{X}^{(l)}\mathbf{L}^{k}, \quad \text{where} \quad \mathbf{\Theta}_k^{(l)} = (\mathbf{W}_k^{(l)})^T.
\end{aligned}
\end{equation}

\section{Spatial attention} \label{spatial attention}
Spatial attention (SAtt) \citep{feng2017effective} is widely adopted in time series prediction models. It can capture the dynamic spatial relationships effectively. The mechanism can be formulated as: 
\begin{equation}
\label{spatialAtt}  
\mathbf{S}_t=\mathbf{V}_s \sigma (((\mathbf{U}_{t}+\mathbf{\Bar{P}}_t)^T \mathbf{W}_1)\mathbf{W_2} (\mathbf{W_3}(\mathbf{U}_{t}+\mathbf{\Bar{P}}_t)+\mathbf{b}_s)  \end{equation}    
\begin{equation} 
\label{spatialAtt2}  (\mathbf{S}_t^{'})_{i,j} = \frac{\exp((\mathbf{S}_t)_{i,j})}{\sum_{j=1}^{N}\exp((\mathbf{S}_t)_{i,j})} 
\end{equation}   

where $\mathbf{V}_s$, $\mathbf{b}_s \in \mathbb{R}^{N \times N }$, $\mathbf{W}_1 \in \mathbb{R}^{1}$, $\mathbf{W}_2 \in \mathbb{R}^{1}$, $\mathbf{W}_3 \in \mathbb{R}^{1}$ are learnable parameters and $\sigma(.)$ is the activation function. The spatial semantic dependencies can be represented by the value of $\mathbf{S}_t$ in Equation \ref{spatialAtt}. In addition, $\mathbf{S}_t^{'}$ in Eq. \ref{spatialAtt2} is a spatial attention matrix normalized by the softmax function. Then, we can obtain the Laplacian matrix $\mathbf{L}_t$ with the spatial attention matrix $\mathbf{S}_t' \in \mathbb{R}^{N \times N}$ to dynamically adjust the weight adjacency matrix $\mathbf{A}_t$ at time $t$.

\section{LSTM}\label{LSTM}
 LSTM unit consists of a forget gate, an input gate, and an output gate. The forget gate determines which information to discard from the previous state. Similarly, the input gate decides which information to store in the current state using the same system as the forget gate. The output gate decides the output by taking the current state and previous states into consideration. By filtering the relevant information from the states in different time steps, the LSTM network can capture long- and short-term dependencies for making predictions of future time steps. The following equations can explain the mechanism of operation of the LSTM unit \citep{LSTM}: 
\begin{equation} 
\centering \mathbf{i}_t=\sigma(\mathbf{W}_i \mathbf{X}_t + \mathbf{U}_i\mathbf{h}_{t-1}+\mathbf{b}_i) \end{equation} \begin{equation} \centering \mathbf{f}_t=\sigma(\mathbf{W}_f \mathbf{X}_t + \mathbf{U}_f\mathbf{h}_{t-1}+\mathbf{b}_f)  \end{equation}  \begin{equation} \centering \tilde{\mathbf{c}}_t=\tanh(\mathbf{W}_c \mathbf{X}_t + \mathbf{U}_c\mathbf{h}_{t-1}+\mathbf{b}_c) \end{equation}  \begin{equation}  \centering \mathbf{c}_t=\mathbf{f}_t \odot  \mathbf{c}_{t-1} + \mathbf{i}_t \odot \tilde{\mathbf{c}}_t \end{equation}   \begin{equation}  \centering \mathbf{o}_t=\sigma(\mathbf{W}_o \mathbf{X}_t +\mathbf{U}_o \mathbf{h}_{t-1}+\mathbf{b}_o)  \end{equation}   \begin{equation} \centering \mathbf{h}_t=\mathbf{o}_t \odot \tanh(\mathbf{c}_t) 
\end{equation} 
where $\sigma$ is the activation function, $\mathbf{h}_t$ is the hidden features, $\mathbf{f}_t$ is the forget gate, $\mathbf{i}_t$ is the input gate, and $\mathbf{o}_t$ is the output gate, while $\mathbf{c}_t$ and $\mathbf{c}_{t-1}$ are the LSTM unit states. In addition, $\mathbf{W}_i$, $\mathbf{W}_f $, $\mathbf{W}_c$, $\mathbf{W}_o$, and $\mathbf{U}_i$, $\mathbf{U}_f $, $\mathbf{U}_c$, $\mathbf{U}_o$ are their parameters, while $\mathbf{b}_i$, $\mathbf{b}_f $, $\mathbf{b}_c$, and $\mathbf{b}_o$ are their bias in the embedding functions. Thus, we can obtain the future embedding states as follows: \begin{equation}         \mathbf{h}_{t+1:t+M}=\mathrm{LSTM}(\hat{\mathbf{P}}_{t-M:t}+\mathbf{U}_{t-M:t})  
\end{equation}

\section{Temporal attention} \label{Temporal attention}
To better capture the temporal features, the temporal attention mechanism is applied to the predictive stage. Attention is a powerful tool, which has been widely adopted by many famous models. It is proven that temporal attention can effectively explore the temporal correlations under different situations in different time slices adaptively. The mechanism of its operation can be expressed as \citep{guo2019attention}: 
\begin{equation}  
\label{eq11} 
\mathbf{E}=\mathbf{V}_e \sigma \big((\mathbf{h}_{t+1:t+M} \mathbf{U}_1)\mathbf{U_2} (\mathbf{U_3}\mathbf{h}_{t+1:t+M}^{T}+\mathbf{b}_e)\big)
\end{equation}   
\begin{equation}   
\label{eq12} \mathbf{E}_{i,j}^{'} = \frac{\exp(\mathbf{E}_{i,j})}{\sum_{j=1}^{M}\exp(\mathbf{E}_{i,j})}  \end{equation}   
where $\mathbf{V}_e$, $\mathbf{b}_e \in \mathbb{R}^{M \times M }$, $\mathbf{U}_1 \in \mathbb{R}^{N \times 1}$, $\mathbf{U}_2 \in \mathbb{R}^{1 \times N}$, $\mathbf{U}_3 \in \mathbb{R}^{1}$ are learnable parameters, $\sigma(.)$ is the activation function, and $M$ is the length of the temporal dimension, $\mathbf{h}$ represents the output of LSTM layers, the temporal semantic dependencies can be represented by the value of $\mathbf{E}_{i,j}$ by Equation \ref{eq11}. In addition, $\mathbf{E}^{'}$ is a normalized temporal attention matrix by the softmax function. Then, the output of temporal attention is calculated as $\mathbf{\mathbf{\hat{X}}}_{t+1:t+M}=\mathbf{E}^{'}\mathbf{h}_{t+1:t+M}$. 

\section{Overall algorithm procedure of HSPGNN}\label{HSPGNNAlgorithm}
The overall procedure for HSPGNN is summarized in Algorithm \ref{HSPA}.
\begin{algorithm}[!hbt]  
\caption{HSPGNN Algorithm}  
\label{alg:example}
\begin{algorithmic}  
\STATE {\bfseries Input:} Time series values $\mathbf{X}$,  mask $\mathbf{M}$   
\STATE {\bfseries Output:} The predicted missing values $\mathbf{\hat{P}}$, dynamic Laplacian matrix $\mathbf{L}^t$ and model parameters $\mathbf{\Theta}$, $\mathbf{W_v}$ and $\mathbf{\lambda}^{'}$ 
\STATE Randomly initialize the model parameters and input the available values $\mathbf{U} = \mathbf{X} \odot (1-\mathbf{M})$ in the training stage. 
\FOR{each epoch}
\FOR{each batch of $\mathbf{U}_{t-M:t}$ and mask $\mathbf{M}_{t-M:t}$}
\STATE Estimate the missing values by $\mathbf{\bar{P}}_{t-M:t}=\text{MLP}(\mathbf{U}_{t-M:t})$.
\STATE Calculate the dynamic Laplacian matrix $\mathbf{L}^t$ by spatial attention through Eq. \ref{spatialAtt} and \ref{spatialAtt2}.
\STATE Calculate the missing values with more accuracy $\mathbf{\hat{P}}_{t-M:t}$ by Eq. \ref{tmatrix}.
\STATE Input $\mathbf{\hat{P}}_{t-M:t}$ to the objective function Eq. \ref{objectiveFunction} and optimize the model parameters.
\ENDFOR   
\STATE Evaluates the performance by the validation set.
\ENDFOR
\STATE Choose the model parameters by the best performance from the validation set.
\STATE Return the predicted missing values $\mathbf{\hat{P}}$, dynamic Laplacian matrix $\mathbf{L}^t$ and model parameters.
\label{HSPA}
\end{algorithmic}
\end{algorithm}

\section{Experimental results} \label{extra_results}
\subsection{Datasets description}\label{dataset}
\begin{itemize}     
\item Air Quality (AQI): This dataset is a benchmark for the time series imputation from the Urban Computing project of 437 air quality monitoring stations over 43 cities in China. These stations collected six pollutants hourly from May 2014 to April 2015 with a high missing rate (25.67\%), including various missing patterns in real circumstances. In this study, only $PM2.5$ pollutant is taken into consideration. Based on the previous work \citep{yi2016st}, AQI-36 is a simpler version sub-dataset from AQI with a 13.24\% missing rate that only 36 sensor stations are adopted. Four months (March, June, September, and December) are used as the test set without overlapping with the training dataset.     
\item  PeMS-BAY: PeMS-BAY is a famous traffic dataset in spatio-temporal problems. The raw data are collected by Performance Measurement System (PeMS) in the San Francisco Bay Area for nearly 6 months (from Jan 1st 2017 to May 31th 2017) with 2\% originally missing rate and re-sampled into 5 minutes by previous work \citep{li2017diffusion}. The total road network contains 325 road sensors, and each sensor in the network can be regarded as a node in the graph. 80\% of the total time series is used for training, while 20\% for evaluation.    
\item Electricity: This is a public dataset that is released by UCI \citep{misc_electricityloaddiagrams20112014_321}. The electric power consumption of 370 clients is sampled every 15 minutes without missing data. In this study, 48 months (from Jan 1st 2011 to Dec 31st 2014) are chosen. Notably, the electricity dataset does not offer geographic relationships, contrary to the AQI and PeMS-BAY datasets. Similar to the AQI dataset, we apply the data ranging from Jan 2011 to Nov 2011 as the test set while the left is the training set, which is the same evaluation standard of \citet{du2023saits}. 
\end{itemize} 
To better consider the performance of our method, we only consider the out-of-sample scenarios in all datasets, which means the training and evaluation sequences are disjointed. 

\subsection{Data preprocessing and data enhancement}
According to Eq. \ref{Eq:problem definition1}, we initialize missing values as zero, including both the original missing values and the emulated missing values. Consequently, the input tensors inevitably become very sparse. To achieve better imputation results, we perform simple linear interpolation on the missing values node by node using the '$interp1d()$' command.  Additionally, concerning the PeMS-BAY and AQI datasets, we adopt data enhancement method to expand the training datasets due to the short length of the time series. This method repeats the data preprocessing approach. For the interpolated dataset from the data preprocessing step, we randomly drop out some values and interpolate the missing values using the same '$interp1d()$' command. By repeating this processing twice, three training datasets are obtained. Subsequently, we concatenate the three training sets into the enhanced training datasets using the '$np.concatenate(training set1, training set2, training set3)$' command.  Finally, the enhanced training datasets are input into the HSPGNN for training.

\subsection{Evaluation metrics} 
To facilitate comparison with baseline methods, we adopt the same metrics as used in previous baseline studies as follows: 
\begin{itemize}   
\item MAE (Mean Absolute Error)   
\begin{equation}      
MAE = \frac{\sum_{j=1}^{T} \sum_{i=1}^N|(\mathbf{X}_{ij} - \hat{\mathbf{X}}_{ij}) \odot \mathbf{M}_{ij}|}{ \sum_{j=1}^{T} \sum_{i=1}^N \mathbf{M}_{ij}}      
\end{equation}   
\item MSE (Mean Square Error)     
\begin{equation}      
MSE = \frac{\sum_{j=1}^{T} \sum_{i=1}^N(\mathbf{X}_{ij} - \hat{\mathbf{X}}_{ij})^2 \odot \mathbf{M}_{ij}}{ \sum_{j=1}^{T} \sum_{i=1}^N \mathbf{M}_{ij}}    
\end{equation} 
\end{itemize} 
where $\mathbf{X}_{ij}$ and $\hat{\mathbf{X}}_{ij}$ represent the ground truth values and imputed values in the $j$-th time step of the $i$-th node. $T$ is the total time step of samples in the test dataset; $N$ is the total number of nodes. Specifically, $MAE$ and $MSE$ are used to measure the imputation error, the smaller the value is, the better the imputation.

\subsection{Baseline}
We compare the performance of the HSPGNN model with the following baseline methods:
\begin{itemize}
    \item Mean: impute the missing value with the mean value of the total time range node by node.
    \item KNN \citep{cover1967nearest}: impute the missing value by the mean value of 10 nearest neighbor nodes.
    \item MF \citep{cichocki2009fast}: adopt the SVD method that calculates low-rank representations and uses them to recover the missing data.
    \item MICE \citep{white2011multiple}: deploy multiple imputations by chained statistical equations, limiting the maximum number of iterations to 100 and the number of nearest features to 10.
    \item MPGRU \citep{li2017diffusion}: combine the GNN with GRU for spatio-temporal data imputation. 
    \item Transformer \citep{vaswani2017advances}: find out the missing data by multi-head attention mechanism. 
    \item BRITS \citep{cao2018brits}: impute the missing value by bidirectional RNN structure.
    \item GRIN \citep{andrea2021filling}: adopt GNN and GRU for two-step missing data imputation in a bidirectional way. 
    \item M$^2$DMTF \citep{fan2021multi}: recover the missing data by multi-mode deep matrix and tensor factorization methods.  
    \item SAITS \citep{du2023saits}: apply self-attention mechanism for missing value imputation of time series.
    \item NHITS \citep{challu2023nhits}: predict the missing value by hierarchical interpolation and multi-rate data sampling using the MLP block. 
    \item TDM \citep{zhao2023transformed}: impute the missing values of two batches of samples by transforming them into a latent space through deep invertible functions and matching them distributionally.
\end{itemize}

\subsection{Graph-like Optical flow}
As for Optical Flow Based Moving Object Detection in computer vision domain, provided that in the neighborhood of pixel, change of brightness intensity $I(x,y,t)$ does not happen motion field and ignore the high order term, we can use the following expression the moving object can be formulated as follows \cite{aslani2013optical}:
\begin{equation}
    \triangledown I(x,y,t).V_t = -\frac{\partial I(x,y,t)}{\partial t}
\end{equation}
Where $\triangledown I(x,y,t)$ is so-called the spatial gradient of brightness intensity and $V_t$ is the optical flow(velocity vector) of the image pixel and $\frac{\partial I(x,y,t)}{\partial t}$ is the time derivative of the brightness intensity
the brightness intensity on $x, y$ pixel at time $t$. Optical flow can also be defined as the distribution of apparent velocities of movement of brightness pattern in an image.
Similarly, ignoring the higher-order term and the external source, we can obtain a similar graph-like optical flow from Eq. \ref{matrix} as follows:
\begin{equation}
       \mathbf{V}^{ij}_t=-\frac{ \mathbf{X}^i_t - \mathbf{X}^i_{t-1}}{\mathbf{X}_t^i - \mathbf{X}_{t}^j}=-\frac{\lambda }{\mathbf{\Theta}_1 \mathbf{L}_t^{ij}},
\end{equation}
where $\mathbf{V}^{ij}_t$ represent graph-like optical flow from $i$-th node to $j$-th node at time t while $\lambda$, $\mathbf{\Theta_1}$ and $\mathbf{\mathbf{L}_t^{ij}}$ are the parameters of the neural network and Laplacian matrix. Thus, we can obtain the transfer speeds between the nodes.

\subsection{Ablation study}\label{ablation1}
To verify the effectiveness of the individual components in HSPGNN, four variants of HSPGNN are made to compare to the best results of HSPGNN* (the best models on corresponding datasets): 
\begin{itemize}
    \item Without SAtt: retain all the components of our model except the spatial attention to generate the dynamic graphs. Notably, the Electricity dataset adopts the identity matrix as the adjacent matrix since there is no pre-defined graph.
    \item Without Physics Layer: Totally discard the physics-incorporated layers but retain all other components.
    \item Without LSTM and TAtt: Totally discard the LSTM and Temporal attention components but retain all other components. Notably, the synthetic value of 1-D interpolation is used in the training stage.
    \item Without MLP: Totally discard the MLP component but retain all other components.
    \item Vanilla PINN: Apply the physics PDE as part of the regularization, retain other components except for the physics-incorporated layers, and optimize the framework. 
\end{itemize}

We performed ablation experiments for the above variants on all the datasets (Block missing scenarios in PeMS-BAY and Electricity datasets). Tab. \ref{ablation} shows the measurements of MAE. It can be seen that the performance of our HSPGNN model is better than other variants, which confirms the effectiveness of each component in our model. Moreover, compared vanilla PINN and HSPGNN to the model without the physics layer, the results indicate that the physics law can really improve the performance, but the vanilla PINN model is less accurate than HSPGNN and exhibits large variability since it serves the physics constraint as the regularization term, which can not fully integrate the physics law and neural network effectively. In addition, how to tune the best regularization parameter is inevitable in the vanilla PINN framework, which in turn makes it difficult in real applications. However, as illustrated in Fig. \ref{training}, the incorporation of physics into GNNs circumvents this issue, facilitating smoother convergence by seamlessly integrating PDE constraints into the neural network architecture.
\begin{table}[]
\centering 
\caption{The result of ablation study.} 
{
\begin{tabular}{lllll} \hline \multicolumn{1}{c|}{\multirow{2}{*}{Datasets}} & PeMS-BAY           & Electricity        & AQI                 & AQI-36              \\ \cline{2-5}                            \multicolumn{1}{c|}{}& MAE                & MAE                & MAE                 & MAE                 \\ \hline \textbf{HSPGNN}*                      & \textbf{1.10±0.02} & \textbf{0.15±0.01} & \textbf{12.85±0.12} & \textbf{11.19±0.20} \\ Without SAtt              & 1.27±0.03          & 1.67±0.20          & 12.95±0.14          & 12.38±0.23          \\ Without Physics Layer     & 1.25±0.02          & 0.17±0.01          & 13.43±0.18          & 12.73±0.27          \\ Without LSTM and TAtt     & 1.20±0.02          & 0.18±0.01          & 13.69±0.16          & 12.31±0.26          \\ Without MLP               & 1.34±0.02          & 0.17±0.01          & 13.92±0.18          & 12.55±0.25          \\  Vanilla PINN               & 1.24±0.06          & 0.16±0.01          & 13.33±0.22          & 12.27±0.46          \\  \hline  
\label{ablation}
\end{tabular}}  
\end{table}

\begin{figure}[!hbt]  
\centering  
\subfloat[PeMS-BAY.]{
\includegraphics[width=0.25\columnwidth]{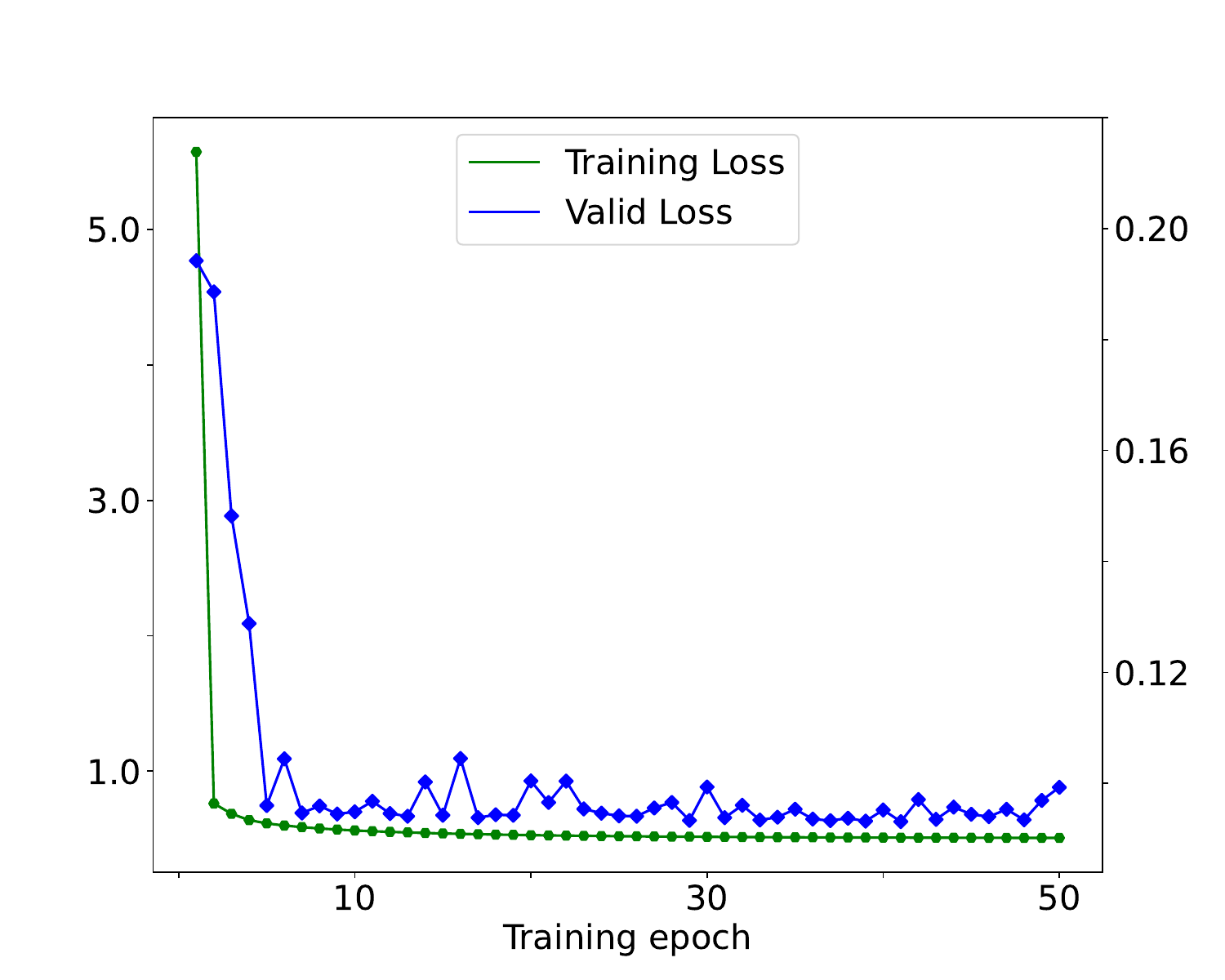} 	
\label{training1} }   
\subfloat[Electricity.]{  
\includegraphics[width=0.25\columnwidth]{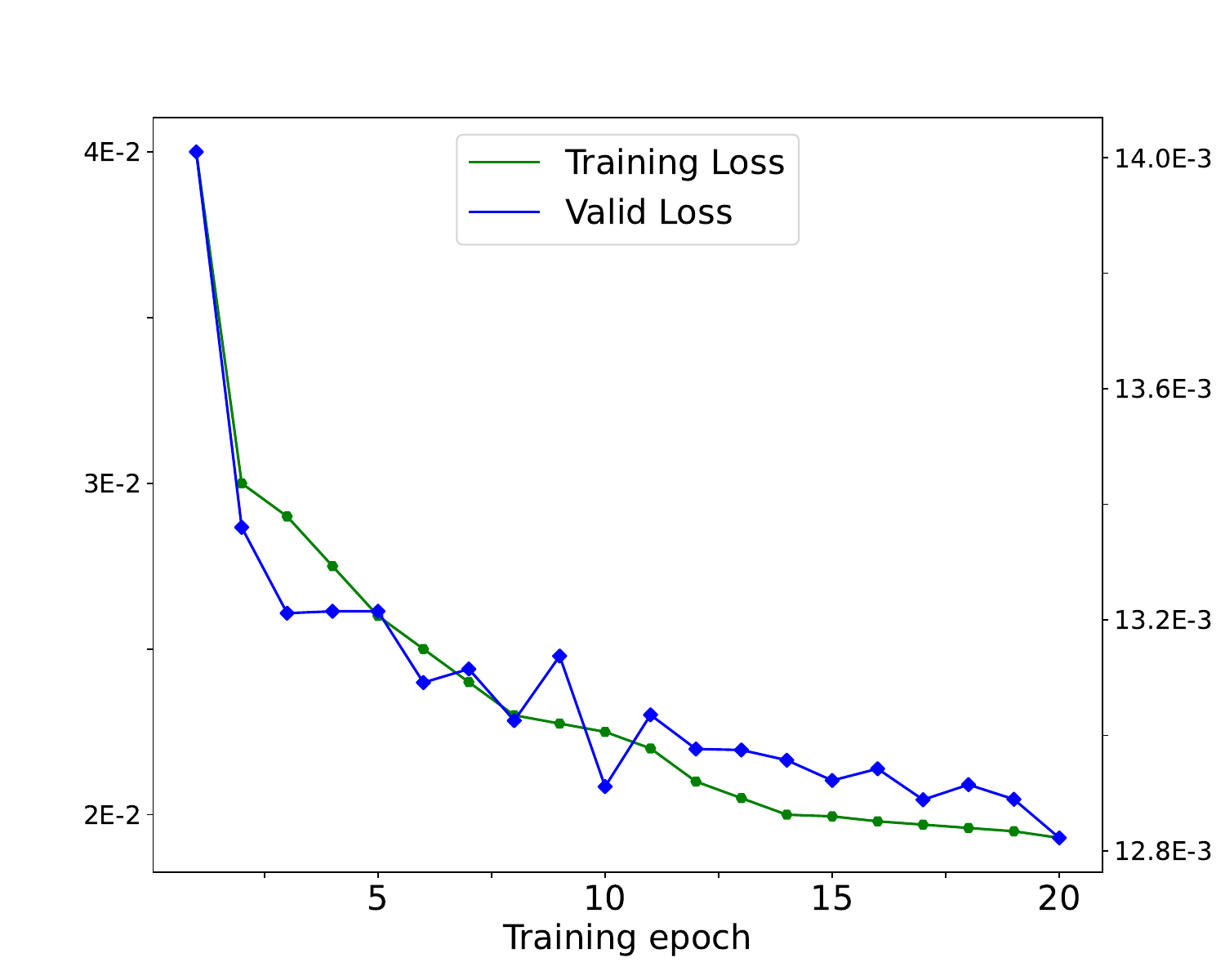}  
\label{training2}  }

\subfloat[AQI dataset.]{
\includegraphics[width=0.25\columnwidth]{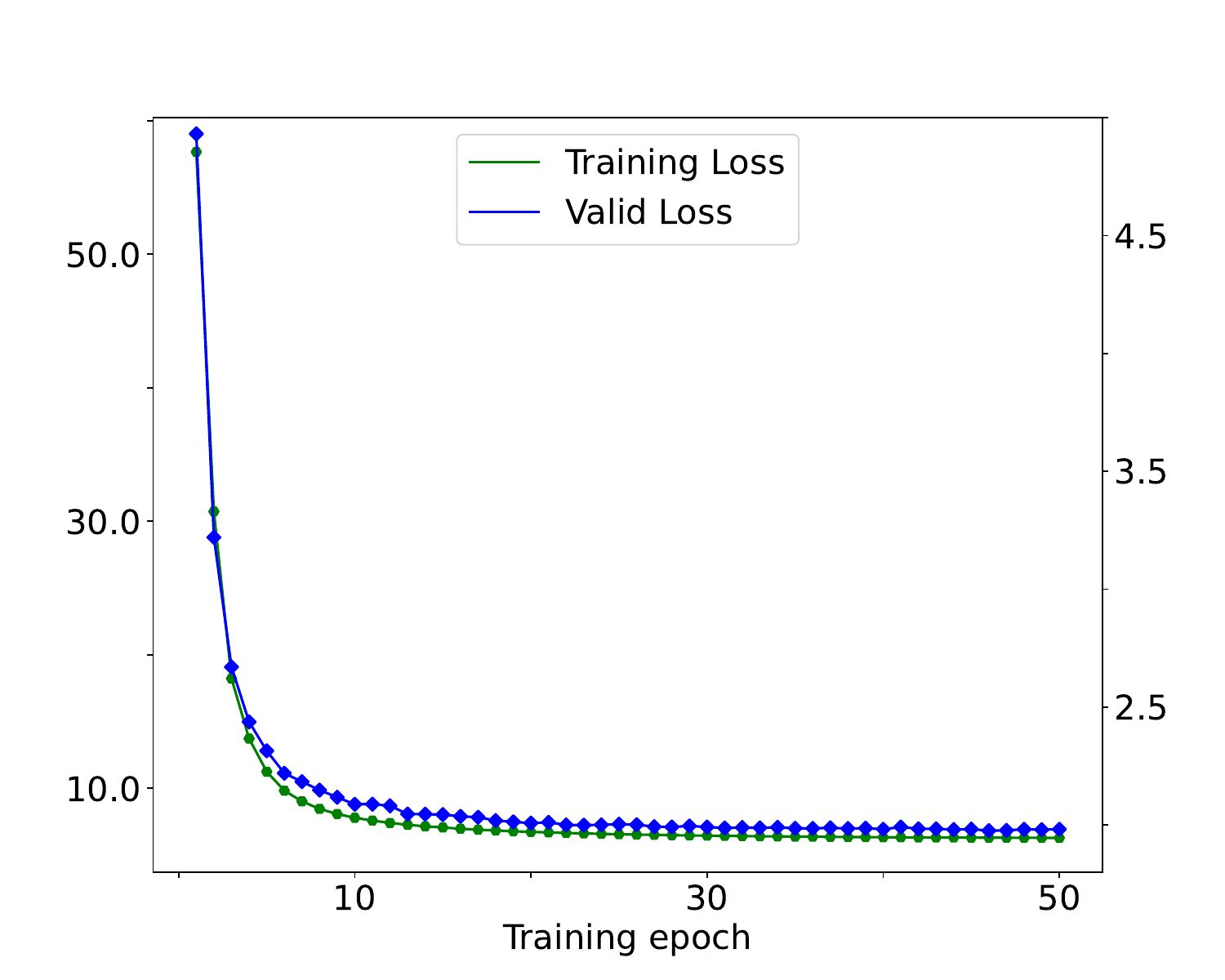} 	
\label{training3} }    
\subfloat[AQI36 dataset.]{ 
\includegraphics[width=0.25\columnwidth]{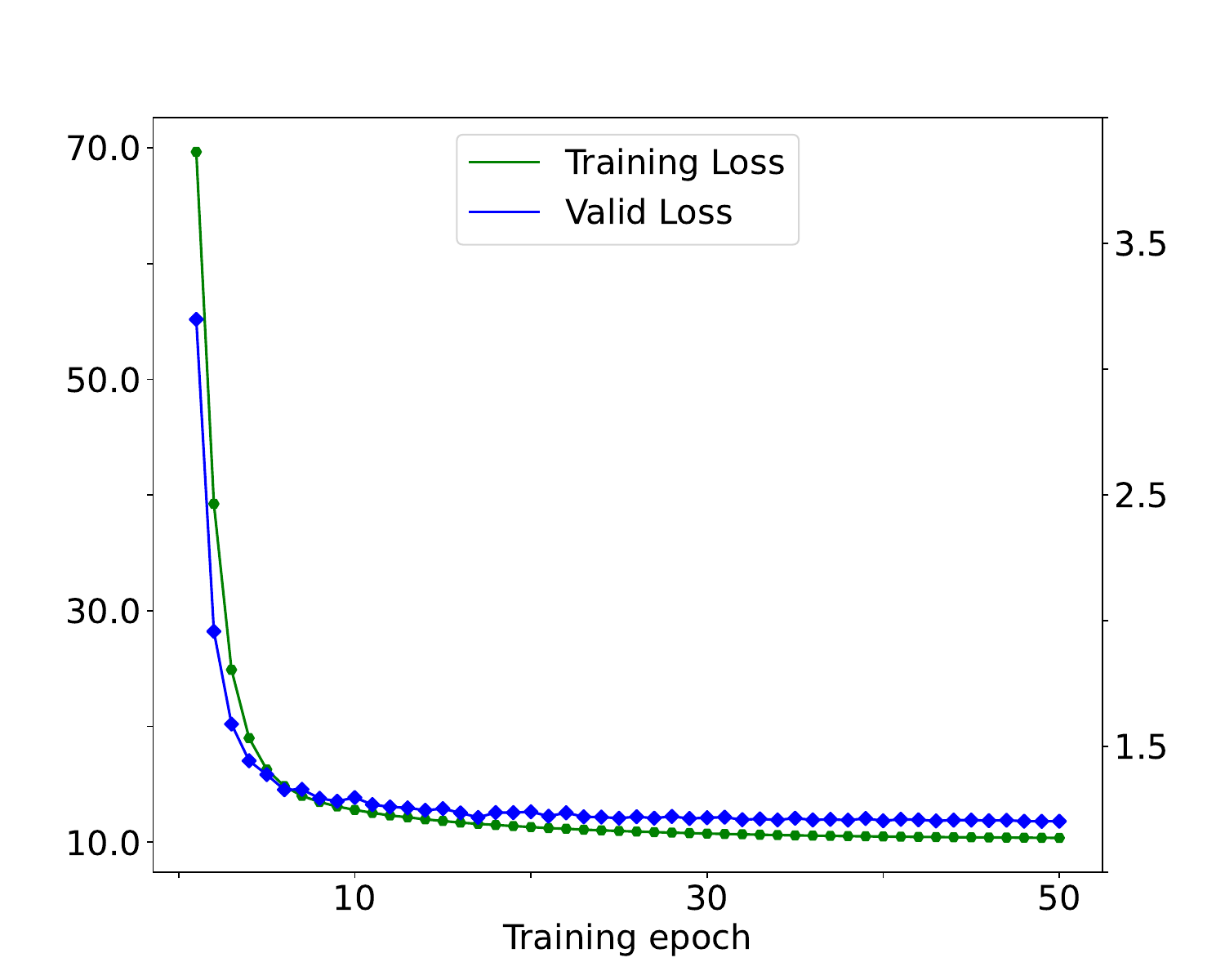}   
\label{training4}  }
\caption{The training and validation loss of four datasets.} 
\label{training}  
\end{figure}

\end{document}